\documentclass[letterpaper,journal]{IEEEtran}
\usepackage{amsmath,amsfonts,amssymb}
\usepackage{algorithmic}
\usepackage{algorithm}
\usepackage[linesnumbered,ruled,vlined,algo2e]{algorithm2e}
\usepackage{array}
\usepackage[caption=false,font=footnotesize]{subfig}
\usepackage{textcomp}
\usepackage{stfloats}
\usepackage{url}
\usepackage{verbatim}
\usepackage{graphicx}
\usepackage[space]{grffile}
\usepackage[numbers,sort&compress]{natbib}
\usepackage{booktabs}
\usepackage{multirow} 
\usepackage{tabularx}
\usepackage{array}
\usepackage{multirow}
\graphicspath{{./graphs/}}

\newcommand{\vdctx}{\boldsymbol{\psi}}
\newcommand{\vstate}{\mathbf{x}}
\newcommand{\vact}{\mathbf{a}}
\newcommand{\vobs}{\mathbf{o}}
\newcommand{\vhid}{\mathbf{h}}
\newcommand{\vlat}{\mathbf{z}}
\newcommand{\vparams}{{\boldsymbol{\theta}^{\mathrm{v}}}}
\newcommand{\vphys}{\boldsymbol{\Phi}}

\DeclareMathOperator{\sg}{sg}
\DeclareMathOperator{\dynpred}{DynPred}
\DeclareMathOperator{\Embed}{Embed}
\DeclareMathOperator{\KL}{KL}
\DeclareMathOperator{\TireF}{TireF}
\DeclareMathOperator{\ODE}{ODE}
\DeclareMathOperator{\Enc}{Enc}

\DeclareMathOperator{\LN}{LN}
\DeclareMathOperator{\Block}{Block}
\DeclareMathOperator{\RoPE}{RoPE}
\DeclareMathOperator{\PreMod}{PreMod}
\DeclareMathOperator{\AdaLN}{AdaLN}

\DeclareMathOperator{\Warp}{Warp}
\newtheorem{proposition}{Proposition}
\newtheorem{corollary}{Corollary}
\usepackage[colorlinks=true,linkcolor=blue,citecolor=blue,urlcolor=blue,linktocpage=true]{hyperref}

\begin{document}

\title{Ego-Dynamics-Augmented World Model for
Autonomous Driving with Zero-Shot
Cross-Embodiment Adaptation}

\author{Zhidong Wang, Jingsong Liang, Zirui Li, \IEEEmembership{Graduate Student Member,~IEEE}, Zhan Chen, Han Yu, \IEEEmembership{Senior Member,~IEEE}, and Chen Lv, \IEEEmembership{Senior Member,~IEEE}
\thanks{This work was supported in part by Nanyang Technological University. (\textit{Corresponding author: Chen Lv.})}
\thanks{Zhidong Wang, Jingsong Liang, Zirui Li, Zhan Chen and Chen Lv are with the School of Mechanical and Aerospace Engineering, Nanyang Technological University, Singapore
(e-mail: \texttt{zhidong001@e.ntu.edu.sg}; \texttt{jingsong002@e.ntu.edu.sg}; \texttt{zirui.li@ntu.edu.sg}; \texttt{zhan014@e.ntu.edu.sg}; \texttt{lyuchen@ntu.edu.sg}).

Zhidong Wang is also with the Collaborative Initiative, Interdisciplinary Graduate Programme, Nanyang Technological 
University, Singapore.

Han Yu is with the College of Computing and Data Science, Nanyang Technological University, Singapore (e-mail: \texttt{han.yu@ntu.edu.sg}).}}

\markboth{}{}


\maketitle

\begin{abstract}
End-to-end autonomous driving requires generalization ability across platforms with dissimilar physical characteristics. The chassis defines the physical embodiment of each platform, and real-world fleets span sub-tonne microcars to bus-class vehicles. Consequently, the driving stack must either be retrained per platform or adapt to the underlying chassis dynamics online.
World model (WM)-based reinforcement learning offers a sample-efficient path toward end-to-end autonomous driving on egocentric bird's-eye-view (BEV) representations, but its effectiveness hinges on how faithfully the WM captures the ego vehicle's dynamics.
This work identifies a structural bottleneck in BEV-based WMs: observation transitions entangle ego-motion with scene dynamics, consuming modeling capacity at the cost of imagination accuracy. This burden is embodiment-dependent: dissimilar chassis produce different observation warps under the same control input.
The proposed \textbf{DynaDreamer} addresses this bottleneck by conditioning the WM's latent distributions on a physics-informed ego-dynamics context derived from a lateral dynamics model with a neural tire force formulation. This context is extracted online via a neural-ODE encoder-decoder that simultaneously identifies the underlying chassis parameters. Information-theoretic analysis confirms that this conditioning removes the ego-motion terms from both the WM's transition entropy and its prior--posterior KL divergence.
The identified physical parameterization enables zero-shot cross-embodiment adaptation across a dynamically diverse fleet without per-platform retraining.
Simulation results show 28\% and 43\% improvements in driving task success rates over the strongest baseline in urban and highway scenarios, and the advantage over the base Transformer WM reaches up to 73\% when extrapolating to unseen chassis.
\end{abstract}

\begin{IEEEkeywords}
World model, model-based reinforcement learning, vehicle dynamics, cross-embodiment adaptation, autonomous driving.
\end{IEEEkeywords}

\section{Introduction}

\IEEEPARstart{E}{nd}-to-end autonomous driving maps raw sensor inputs directly to planning outputs such as waypoints or low-level control commands~\cite{ADSurvey}, increasingly operating on bird's-eye-view (BEV) representations fused from surround-view cameras as a unified metric canvas for joint perception--planning optimization~\cite{BEVFormer}. Deploying such a stack across a production fleet raises a cross-embodiment challenge that is primarily dynamical. Although sensing and actuation interfaces are shared across platforms, the underlying chassis range from sub-tonne microcars to bus-class vehicles and exhibit vastly different dynamic characteristics. The driving policy must therefore remain valid across this entire spectrum. Learning such a dynamics-aware policy is data-intensive: imitation learning suffers from causal confusion and limited out-of-distribution generalization~\cite{VAD}, while reinforcement learning (RL) restores closed-loop competence at the cost of sample efficiency on high-dimensional BEV observations~\cite{BEVRL}. World models (WMs) alleviate this bottleneck by learning compact latent dynamics in which imagined trajectories can be rolled out without real environment interaction, driving rapid progress in WM-based RL for autonomous driving~\cite{9934803,11250785}. However, such imagined rollouts are useful only when they faithfully reflect the ego vehicle's physical dynamics; any mismatch between imagined and actual vehicle behavior propagates directly into suboptimal or unsafe control.
\IEEEpubidadjcol

In the BEV setting, the ego vehicle is fixed at the image center; the change between consecutive frames therefore superimposes the \emph{ego-induced flow} of the viewpoint shift onto the \emph{object-induced flow} of surrounding agents. The WM must spend substantial capacity recovering the former at the expense of modeling the latter. Worse, this recovery reverses causality: ego-motion is physically determined by the executed action and the chassis dynamics, but a pixel-only WM can only infer it a posteriori from observations. This burden is moreover chassis-dependent: vehicles with dissimilar dynamics executing the same action produce different observation warps, a challenge recognized even in platoon control~\cite{10091195,10012062}. A WM that identifies chassis dynamics online would therefore enable zero-shot cross-embodiment adaptation and cover the entire fleet with a single trained model.

This work augments the WM with explicit ego-dynamics conditioning, freeing its modeling capacity for the scene dynamics beyond the ego vehicle's motion. The developed method, \textbf{DynaDreamer} (\textbf{dyna}mics-augmented \textbf{Dreamer}), extracts a compact ego-dynamics context from the ego-state history via a physics-informed neural ordinary differential equation (ODE) encoder-decoder and injects it into the WM's prior and posterior latent distributions. This conditioning is grounded in identifiable chassis parameters rather than a generic learned context, enabling zero-shot cross-embodiment adaptation across dynamically diverse fleets with a single trained model. The contributions are as follows:
\begin{itemize}
    \item \textbf{Theoretical foundation.} An information-theoretic analysis formalizes the structural bottleneck in BEV-based WM learning and proves that ego-dynamics conditioning reduces both the transition entropy and the achievable prior--posterior Kullback--Leibler (KL) divergence by their respective ego-motion terms.
    \item \textbf{Ego-dynamics-augmented world model.} The WM's prior and posterior distributions are conditioned on the ego-dynamics context via adaptive modulation, with this conditioning kept synchronized with the evolving ego state throughout imagination, tightening the prior--posterior gap under diverse chassis dynamics.
    \item \textbf{Zero-shot cross-embodiment adaptation.} The ego-dynamics context is grounded in identifiable chassis parameters, enabling online re-identification of unseen vehicles for dynamically diverse fleets and physically consistent imagined rollouts without per-platform retraining.
\end{itemize}
 
The remainder of this paper is organized as follows. Section~\ref{sec:related} reviews the related work. Section~\ref{sec:prelim} provides the preliminaries on WM-based RL. Section~\ref{sec:method} elaborates the methodology of DynaDreamer. Section~\ref{sec:theory} presents theoretical insights into the ego-dynamics prior. Section~\ref{sec:experiments} presents experiments, including the simulation setup, comparative results and analysis. Section~\ref{sec:conclusion} concludes the paper.

\section{Related Work}
\label{sec:related}
\subsection{World Models for Autonomous Driving}
\label{subsec:rw-wm-ad}
WMs have been adopted in end-to-end autonomous driving under two main formulations. The first learns a world-dynamics model and then trains a policy via IL on either low-level commands~\cite{drivedreamer} or ego-waypoint trajectories~\cite{MultiviewVisualForecastingandPlanning}. The second performs RL directly in the WM's latent space, covering offline RL on large-scale driving logs~\cite{DreamerAD} and online RL within high-fidelity simulators~\cite{think2drive}, such as Dreamer~\cite{Dreamer2025}. Latent sequential models have likewise been exploited for urban driving with multi-modal fusion~\cite{9934803}, and refining decisions inside a learned WM further improves end-to-end driving~\cite{11250785}. \citet{rawdrive} couples BEV perception features with a teacher-student scheme based on Dreamer to learn end-to-end control, yet still suffers from instability in generating low-level commands.
 
Despite this progress, several limitations remain in WMs. The latent could encode substantial task-irrelevant information that dilutes downstream signals, where masked semantic representations help suppress~\cite{SemanticMaskedWorldModel}. Compounding prediction error over long-horizon rollouts has been mitigated by decoupling spatiotemporal factors in autoregressive diffusion WMs~\cite{EponaDiffusionWM}. Yet all these advances refine a world model learned purely from pixel reconstruction, with no physical inductive bias~\cite{PhysicsConsistency}. The ego vehicle's own motion therefore remains entangled with the environment dynamics in the shared egocentric latent, a gap that DynaDreamer addresses.
 
\subsection{Augmenting World Models with Ego-Dynamics}
\label{subsec:rw-physics}
The egocentric BEV latent entangles ego-motion with environment dynamics. A prominent line of work addresses this entanglement by modeling the ego dynamics separately from the surroundings. Representative strategies include pairing a deterministic kinematic ego model with a stochastic environment model~\cite{SeparatingEgoWorld}, learning a separate predictive WM for ego trajectories~\cite{PredictiveIndividualWorldModel}, decoupling ego kinematics in latent space~\cite{VehicleDynamicsEmbeddedWM}, applying action-conditioned decoding~\cite{DriveX}, and disentangling controllable from observable factors~\cite{DisentangledAgentEnv}. However, none of these methods feeds the ego-dynamics signal back to condition the WM's prior and posterior distributions, and none ties the signal to identifiable physical parameters. In parallel, vehicle-dynamics research identifies chassis parameters within hybrid physics--data models~\cite{10566037} and embeds ODE-based physical knowledge into neural state estimators~\cite{11568729}, yet these identification techniques remain unconnected to world-model learning.
 
Dynamically dissimilar vehicles moreover require different control strategies despite similar observations. The resulting chassis-parameter variability has motivated model-free controllers that abandon the dynamics model altogether~\cite{LiuRLPathFollowing2024}. Context-based adaptation methods~\cite{MetaRL,DistributionDrift} incorporate generic latents but lack physical parameterization. The closest prior work applies per-vehicle physical conditioning at the policy level alone, without a predictive WM~\cite{OneModeltoDriftThemAll}. DynaDreamer closes this gap by jointly identifying chassis parameters, injecting them into the WM's prior and posterior, and maintaining this conditioning throughout imagination, unifying ego-dynamics identification and world-model learning.
\section{Preliminaries}
\label{sec:prelim}
 
Consider the task of autonomous driving formulated as a partially observable Markov decision process (POMDP). At each step $t$, the agent receives an egocentric BEV observation $\vobs_t$ with the ego vehicle fixed at the image center and executes a continuous control command $\vact_t=[a_{\mathrm{acc}},a_{\mathrm{steer}}]^\top$, where $a_{\mathrm{acc}}$ is a longitudinal command and $a_{\mathrm{steer}}$ a lateral command. The environment returns a scalar reward $r_t$ and a continuation flag $n_t\in\{0,1\}$, and the objective is to maximize the expected discounted return $\mathbb{E}\left[\sum_t \gamma^t r_t\right]$ with discount factor $\gamma\in(0,1)$.
 
To handle the high-dimensional pixel space and unknown environment dynamics, a WM learns a compact state $\mathbf{s}_t\triangleq(\vhid_t,\vlat_t)$ as a differentiable proxy of the environment, where $\vhid_t$ is a deterministic state summarizing history and $\vlat_t$ is a stochastic latent~\cite{Dreamer2025}. The model comprises five components: a posterior $\vlat_t\sim q_\phi(\vlat_t\mid\vhid_t,\vobs_t)$, a prior $\hat{\vlat}_t\sim p_\phi(\hat{\vlat}_t\mid\vhid_t)$, a reward predictor $\hat r_t\sim p_\phi(\hat r_t\mid\vhid_t,\vlat_t)$, a continuation predictor $\hat n_t\sim p_\phi(\hat n_t\mid\vhid_t,\vlat_t)$, and a decoder $\hat\vobs_t\sim p_\phi(\hat\vobs_t\mid\vhid_t,\vlat_t)$. The posterior infers $\vlat_t$ with access to the current observation, whereas the prior predicts it from history alone; their mismatch is penalized by two KL terms, giving the WM loss:
\begin{equation}
\begin{aligned}
\mathcal{L}_{\mathrm{WM}} &= \mathcal{L}_{\mathrm{rec}} + \mathcal{L}_r + \mathcal{L}_n + \beta_{\mathrm{dyn}}\mathcal{L}_{\mathrm{dyn}} + \beta_{\mathrm{rep}}\mathcal{L}_{\mathrm{rep}},\\[4pt]
\mathcal{L}_{\mathrm{dyn}} &= \max\left(1,\KL\left[\sg\left(q_\phi\right)\,\|\,p_\phi\right]\right),\\
\mathcal{L}_{\mathrm{rep}} &= \max\left(1,\KL\left[q_\phi\,\|\,\sg\left(p_\phi\right)\right]\right),
\end{aligned}
\label{eq:wm-loss}
\end{equation}
where $\mathcal{L}_{\mathrm{rec}},\mathcal{L}_r,\mathcal{L}_n$ are the reconstruction, reward, and continuation losses, $\sg(\cdot)$ is the stop-gradient operator, and $\beta_{\mathrm{dyn}},\beta_{\mathrm{rep}}>0$ are KL-balancing coefficients. The deterministic state is advanced by a sequence model over past latents and actions, $\vhid_{1:T}=f_\phi\!\left(\Embed\left(\vlat_{1:T},\vact_{1:T}\right)\right)$, instantiated here as a causal Transformer with a key--value (KV) cache over multi-hot categorical latents~\cite{STORM2023}. This architecture enables parallel training and long-horizon credit assignment and serves as the backbone that DynaDreamer augments with ego-dynamics.
 
The driving policy is learned entirely within imagined rollouts. Starting from a state $\mathbf{s}_t$, the actor $\pi_\theta(\vact_t\mid\mathbf{s}_t)$ samples an action $\vact_t$; the sequence model then advances $\vhid_{t+1}$; and the prior, reward, and continuation heads produce $\hat{\vlat}_{t+1}$, $\hat r_{t+1}$, and $\hat n_{t+1}$, respectively. The actor and a critic $V_\rho(\mathbf{s}_t)$ are optimized on these rollouts to maximize the $\lambda$-return:
\begin{equation}
G^\lambda_t = \hat r_t + \gamma\hat n_t\left[\left(1-\lambda\right)V_\rho\left(\mathbf{s}_{t+1}\right) + \lambda G^\lambda_{t+1}\right],
\label{eq:lambda-return}
\end{equation}
where $\lambda\in[0,1]$ is the trace-decay coefficient. A key consequence of this design is that imagination only rolls out the prior, and any dynamics that the prior cannot anticipate are irrecoverable within a rollout, which is a central property for the analysis in Section~\ref{sec:theory}.

\begin{figure*}[!t]
\centering
\includegraphics[width=\textwidth]{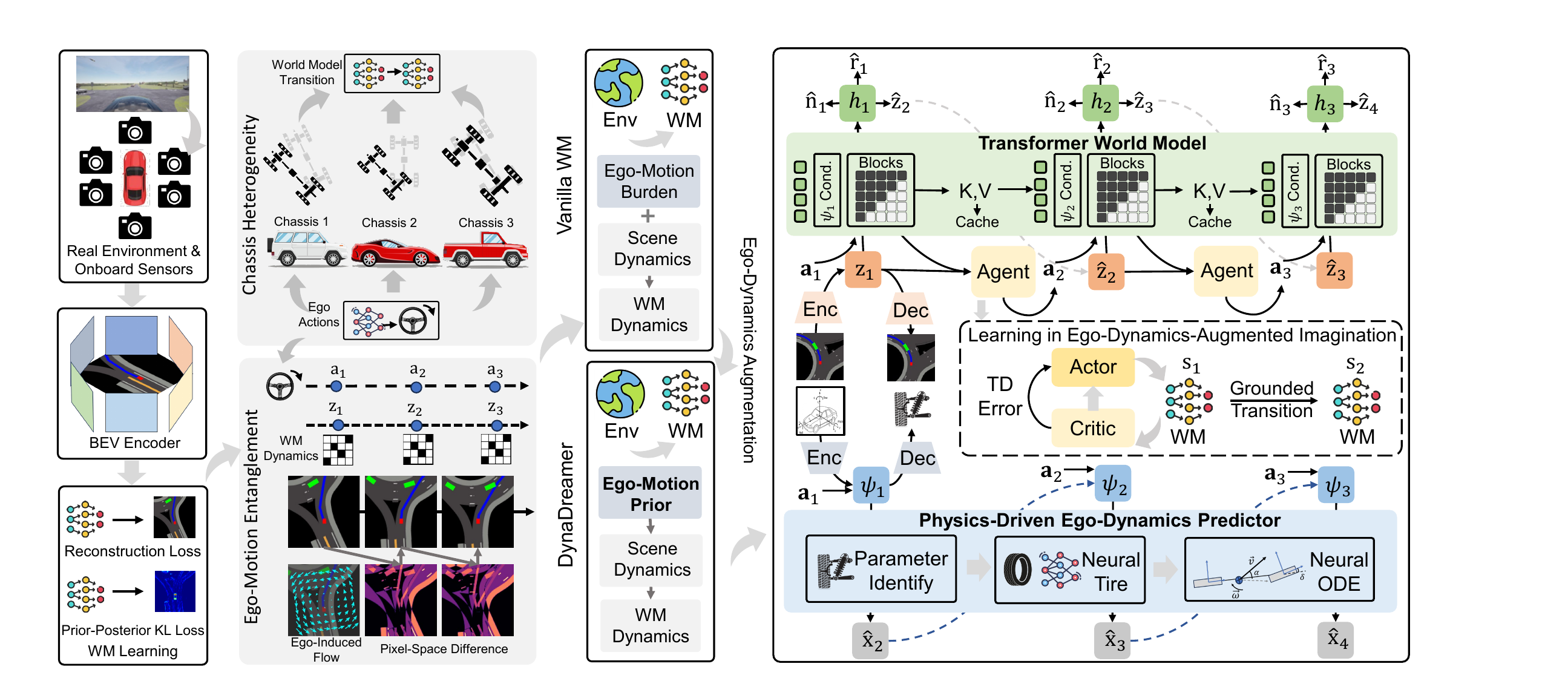}
\caption{Overview of DynaDreamer. In egocentric BEV WMs, ego-motion induces a dominant, chassis-dependent flow field that entangles with the scene dynamics, inflating both the reconstruction loss and the prior--posterior KL divergence and preventing transfer across dynamically dissimilar vehicles. DynaDreamer replaces this implicit burden with an explicit ego-motion prior grounded in identifiable chassis physics: a physics-informed neural-ODE encoder-decoder extracts the ego-dynamics context $\vdctx_t$, which conditions the Transformer WM via AdaLN modulation and is propagated through imagination via a modulated KV cache.}
\label{fig:overall-framework}
\end{figure*}

\section{Methodology}
\label{sec:method}
 
Resolving the ego-motion burden with an ego-dynamics conditioning signal imposes three requirements that jointly determine the architecture. First, \emph{identifiability}: the conditioning must be recoverable online from onboard state--action histories and grounded in physical chassis parameters to enable transfer to unseen vehicles. Second, \emph{distributional injection}: the conditioning must enter the WM's prior and posterior latent distributions, because routing the same information only to the policy leaves the prior--posterior bottleneck unresolved (Section~\ref{sec:theory}). Third, \emph{imagination-time synchronization}: the conditioning must track the evolving ego state during rollouts, where no observations are available. Sections~\ref{subsec:vd-enc}--\ref{subsec:vd-imag} address these three requirements in turn: a physics-informed neural-ODE encoder-decoder produces the ego-dynamics context (VD-context) $\vdctx_t$ and identified chassis parameters $\hat\vparams_t$; Adaptive Layer Normalization (AdaLN) modulation with Rotary Position Embedding (RoPE) conditions both latent distributions on $\vdctx_t$; and an ODE predictor with a modulated KV cache updates $\vdctx_t$ throughout imagination. Fig.~\ref{fig:overall-framework} provides an overview.

\subsection{Ego-Dynamics Encoder-Decoder}
\label{subsec:vd-enc}
 
\subsubsection{Overview}
An ego-dynamics encoder-decoder is designed to extract and update a compact dynamics context $\vdctx_t$ from a sliding window of ego-vehicle states and actions. Let $\vstate_t=[v_x,v_y,a_x,a_y,\omega,\dot\omega]^\top$ be the ego-vehicle states (longitudinal/lateral velocities, accelerations, yaw rate, and yaw acceleration), and $\vact_t=[a_{\mathrm{acc}},a_{\mathrm{steer}}]^\top$ be the corresponding normalized control inputs. Define the $K$-step context window $\mathcal{W}_t=(\vstate_{t-K+1:t},\vact_{t-K+1:t})$. The encoder-decoder comprises five sub-modules:
\begingroup
\allowdisplaybreaks
\begin{subequations}
\label{eq:vd-modules}
\begin{alignat}{2}
\text{Physics Aug.:}\quad
& \vphys_t
&= {}& \phi_{\mathrm{phys}}(\mathcal{W}_t;\vparams),
\label{eq:vd-phys-aug}\\
\text{VD-context Enc.:}\quad
& (\vdctx_t,\hat\vparams_t)
&= {}& \Enc_\xi(\mathcal{W}_t,\vphys_t),
\label{eq:vd-context-encoder}\\
\text{Neural Tire:}\quad
& F_y
&= {}& \TireF(\alpha,v_x;C),
\label{eq:vd-tire-model}\\
\text{Bicycle ODE:}\quad
& \hat\vstate_{t+1}^{3}
&= {}& \ODE(\vstate_t^{3},\vact_t;\hat\vparams_t),
\label{eq:vd-bicycle-ode}\\
\text{Dynamics Pred.:}\quad
& \hat\vstate_{t+1}
&= {}& \dynpred(\vdctx_t,\vstate_t,\vact_t;\hat\vparams_t),
\label{eq:vd-dyn-predictor}
\end{alignat}
\end{subequations}
\endgroup
where $\vparams$ is the chassis parameter vector, $\vphys_t$ is the physics-augmented feature vector at time $t$, $\alpha$ is the tire slip angle, $C$ is the cornering stiffness coefficient, and $\vstate_t^3=[v_x,v_y,\omega]^\top$ is the three-degree-of-freedom (3-DOF) lateral dynamics state. Fig.~\ref{fig:method-arch}(a) illustrates the complete encoder-decoder pipeline.
 
\begin{algorithm2e}[!t]
\small
\DontPrintSemicolon
\SetAlgoLined
\KwIn{Normalized batch $(\vstate_{1:T},\vact_{1:T})$; window size $K$; ground-truth chassis $\vparams^{\star}$; collision mask $b_{t+k}\in\{0,1\}$.}
\KwOut{$\vdctx_{1:T}$,\ $\hat\vparams_{1:T}$,\ $\mathcal{L}_{\mathrm{aux}}$,\ $\mathcal{L}_{\mathrm{param}}$.}
Build causal windows $\mathcal{W}_t=(\vstate_{t-K+1:t},\vact_{t-K+1:t})$ for $t=1,\ldots,T$.\;
Compute initial physics features $\vphys^{(1)}$ using $\vparams_{\mathrm{nom}}$.\;
$(\vdctx_{1:T}^{(1)},\hat\vparams_{1:T}^{(1)})\leftarrow\Enc_\xi(\mathcal{W}_{1:T},\vphys^{(1)})$.\;
Refine $\vphys^{(2)}$ using estimated $\hat\vparams_{1:T}^{(1)}$.\;
$(\vdctx_{1:T},\hat\vparams_{1:T})\leftarrow\Enc_\xi(\mathcal{W}_{1:T},\vphys^{(2)})$.\;
Initialize $\mathcal{L}_{\mathrm{aux}}\leftarrow 0$.\;
\For{each rollout origin $t$}{
    Initialize rollout from ground-truth: $\hat{\vstate}_t\leftarrow\vstate_t$.\;
    \For{$k=1$ \KwTo $K_{\mathrm{pred}}$}{
        $\hat{\vstate}_{t+k}\leftarrow\dynpred(\vdctx_t,\hat{\vstate}_{t+k-1},\vact_{t+k-1};\hat\vparams_t)$.\;
        Accumulate masked prediction error: $\mathcal{L}_{\mathrm{aux}}\mathrel{+}=b_{t+k}\|\hat{\vstate}_{t+k}-\vstate_{t+k}\|_1$.\;
        Detach $\hat{\vstate}_{t+k}$ from computation graph.\
    }
}
Calculate $\mathcal{L}_{\mathrm{aux}}$ and $\mathcal{L}_{\mathrm{param}}$.\;
\Return $\vdctx_{1:T},\ \hat\vparams_{1:T},\ \mathcal{L}_{\mathrm{aux}},\ \mathcal{L}_{\mathrm{param}}$.\;
\caption{Iterative ego-dynamics context encoding and multi-step prediction.}
\label{alg:vd-encdec}
\end{algorithm2e}

\subsubsection{Ego-dynamics encoder and parameter head}
To augment inputs with physics information, six physics-informed features are derived from $\mathcal{W}_t$:
\begin{equation}
\phi_{\mathrm{phys}}\left(\mathcal{W}_t;\vparams\right) = \bigl[\beta,\;v,\;\kappa,\;\tilde F_x,\;\tilde F_y,\;\tilde M_z\bigr],
\label{eq:phys-feat}
\end{equation}
where $\beta=\arctan(v_y/v_x)$ is the side-slip angle, $v=\|(v_x,v_y)\|$ is the speed, $\kappa=\omega/v$ is the trajectory curvature, and $(\tilde F_x,\tilde F_y,\tilde M_z)$ are coarse vehicle-level force and moment estimates derived from the identified mass and yaw inertia. A Gated Recurrent Unit (GRU) encodes the augmented sequence $[\vstate_{t-K+1:t};\vphys_t;\vact_{t-K+1:t}]$ into $\vdctx_t$.
 
The parameter vector $\vparams=[m,\,I_z,\,l_f,\,l_r,\,C_f,\,C_r,\,\delta_{\max}]^\top$ collects the chassis quantities governing lateral vehicle dynamics: mass, yaw inertia, axle distances, cornering stiffnesses, and maximum steering angle. Computing $\vphys_t$ requires $\vparams$, which is itself the identification target; encoding therefore proceeds in two weight-shared stages that bootstrap the physics features from nominal parameters $\vparams_{\mathrm{nom}}$ and refine them with the first-stage estimate to yield the final $\vdctx_t$ and $\hat\vparams_t$. The parameter head maps $\vdctx_t$ to $\hat\vparams_t$ through a $\tanh$-bounded log-space multilayer perceptron (MLP), constraining $\hat\vparams_t$ to a physically plausible multiplicative offset from $\vparams_{\mathrm{nom}}$.
 
\subsubsection{Neural tire model}
A tire model maps axle slip angles and cornering stiffnesses to lateral forces, forming the physical link between $\hat\vparams_t$ and the bicycle ODE.
A linear tire model cannot capture the nonlinear saturation that dominates lateral force at high slip angles, yet a purely data-driven replacement discards the causal structure required for stable ODE integration.
To reconcile fidelity with physical tractability, the lateral force is modeled as a linear prior plus a zero-initialized neural correction:
\begin{equation}
\TireF\left(\alpha,v_x;C\right) \;=\; -C\alpha \;+\; \Delta_{\mathrm{nn}}\left(\alpha,v_x,C\right),
\label{eq:neural-tire}
\end{equation}
where $\Delta_{\mathrm{nn}}$ is a two-hidden-layer MLP with $\tanh$ activations. Zero initialization ensures that at the start of training $\Delta_{\mathrm{nn}}\equiv 0$, reducing the dynamics to a linear bicycle model; the network then progressively captures saturation and speed-dependent stiffness. The per-axle stiffnesses $C\in\{C_f,C_r\}$ and axle offsets $(l_f,l_r)$ are drawn from $\hat\vparams_t$; the front and rear slip angles derive from $\vstate_t^3$:
\begin{equation}
\alpha_f = \delta - \arctan\tfrac{v_y+l_f \omega}{v_x}, \qquad
\alpha_r = -\arctan\tfrac{v_y-l_r \omega}{v_x},
\label{eq:slip-angles}
\end{equation}
where $\delta=a_{\mathrm{steer}}\cdot g(v_x)\cdot\delta_{\max}$ is the estimated front-wheel steering angle, and $g(v_x)$ is a fixed piecewise-linear speed-attenuation curve that accounts for the speed-dependent steering gain. Although the true speed-dependent gain varies across chassis, sharing a fixed $g(v_x)$ remains valid: the identified $\delta_{\max}$ rescales the curve per vehicle, and the remaining chassis-specific mismatch is absorbed by the tire correction $\Delta_{\mathrm{nn}}$ and the residual predictor of Eq.~\eqref{eq:dynpred}.
 
\subsubsection{Bicycle model ODE}
A single-track bicycle ODE combines the lateral forces from the tire model with the identified parameters $\hat\vparams_t$ to forward-predict the kinematic states.
Let $F_{yf}=\TireF(\alpha_f,v_x;C_f)$ and $F_{yr}=\TireF(\alpha_r,v_x;C_r)$ denote the front and rear axle lateral forces given by Eq.~\eqref{eq:neural-tire}. Let $F_{xr}=a_{\mathrm{acc}}\,m\,a_{\max}$ denote the rear longitudinal force, where $a_{\max}$ is a fixed acceleration limit, and let $\delta$ denote the front wheel angle defined below Eq.~\eqref{eq:slip-angles}.
The evolution of $\vstate_t^3$ is then governed by the single-track bicycle model \cite{4162483}:
\begin{equation}
\begin{aligned}
\dot v_x &= \frac{F_{xr}-F_{yf}\sin\delta}{m}+v_y \omega,\\
\dot v_y &= \frac{F_{yf}\cos\delta+F_{yr}}{m}-v_x \omega,\\
\dot \omega   &= \frac{l_f F_{yf}\cos\delta - l_r F_{yr}}{I_z}.
\label{eq:bicycle-ode}
\end{aligned}
\end{equation}
 
The decoded parameter vector $\hat\vparams_t$ supplies all vehicle-specific quantities ($m, I_z, l_f, l_r$). Eq.~\eqref{eq:bicycle-ode} is integrated with a forward Euler step of size $\Delta t$; the remaining three state components $(a_x,a_y,\dot\omega)$ are recovered by finite differences.
 
The physics-informed ODE embeds causal structure to prevent physically implausible rollouts and improve data efficiency. Its explicit parameter interface further allows $\hat\vparams_t$ to be supervised by ground-truth chassis parameters, tying $\vdctx_t$ to identifiable physical quantities.
 
\subsubsection{Dynamics predictor}
The single-track bicycle ODE provides a physically grounded prior but cannot represent higher-order effects excluded by the single-track assumption. A neural residual conditioned on $\vdctx_t$ augments the physics prior to capture these unmodeled dynamics.
$\dynpred(\vdctx_t,\vstate_t,\vact_t;\hat\vparams_t)$ composes the bicycle ODE step with a zero-initialized neural residual operating in normalized state space:
\begin{equation}
\hat{\vstate}_{t+1} \;=\; \ODE\left(\vstate_t^{3},\vact_t;\hat\vparams_t\right) \;+\; \eta_\xi\left(\vdctx_t,\vstate_t,\vact_t\right),
\label{eq:dynpred}
\end{equation}
where $\ODE(\cdot)$ denotes the Euler-discretized bicycle step, and $\eta_\xi$ is a residual MLP block initialized to zero so that training starts from the pure physics prediction. The neural residual progressively absorbs suspension effects, load transfer, and other modeling errors that the single-track approximation cannot represent.
 
\subsubsection{Auxiliary losses}
The ego-dynamics neural-ODE encoder-decoder is trained with two independently weighted losses, multi-step prediction loss $\mathcal{L}_{\mathrm{aux}}$ and parameter identification loss $\mathcal{L}_{\mathrm{param}}$; stop-gradient on $\vdctx_t$ isolates their gradients within this module.
$\mathcal{L}_{\mathrm{aux}}$ is a uniformly weighted $K_{\mathrm{pred}}$-step rollout loss. Starting from the ground-truth state $\vstate_t$, predictions unroll autoregressively with gradients severed between steps through the ODE chain, which prevents unstable gradient amplification through repeated ODE integration while still exposing the encoder to compounding-error feedback:
\begin{equation}
\mathcal{L}_{\mathrm{aux}} \;=\; \tfrac{1}{K_{\mathrm{pred}}}\sum_{k=1}^{K_{\mathrm{pred}}} b_{t+k}\,\|\hat{\vstate}_{t+k}-\vstate_{t+k}\|_1,
\label{eq:aux-loss}
\end{equation}
where $\hat{\vstate}_{t+k}$ is the $k$-step prediction and $b_{t+k}\in\{0,1\}$ is a binary mask that zeroes out steps at or after a collision boundary. $\mathcal{L}_{\mathrm{param}}$ is defined in log space:
\begin{equation}
\mathcal{L}_{\mathrm{param}} \;=\; \left\|\,\log\tfrac{\hat\vparams_t}{\vparams_{\mathrm{nom}}} - \log\tfrac{\vparams^{\star}}{\vparams_{\mathrm{nom}}}\,\right\|_2^2,
\label{eq:param-loss}
\end{equation}
where $\vparams^{\star}$ is the ground-truth chassis parameter vector. $\mathcal{L}_{\mathrm{aux}}$ and $\mathcal{L}_{\mathrm{param}}$ force $\vdctx_t$ to carry actionable, physically grounded information. Algorithm~\ref{alg:vd-encdec} summarizes the complete pipeline of the ego-dynamics encoder-decoder, which delivers $\vdctx_t$ as a self-consistent, physics-grounded ego-dynamics prior ready to be injected into the WM.
 
\begin{figure}[!t]
\centering
\includegraphics[width=\columnwidth]{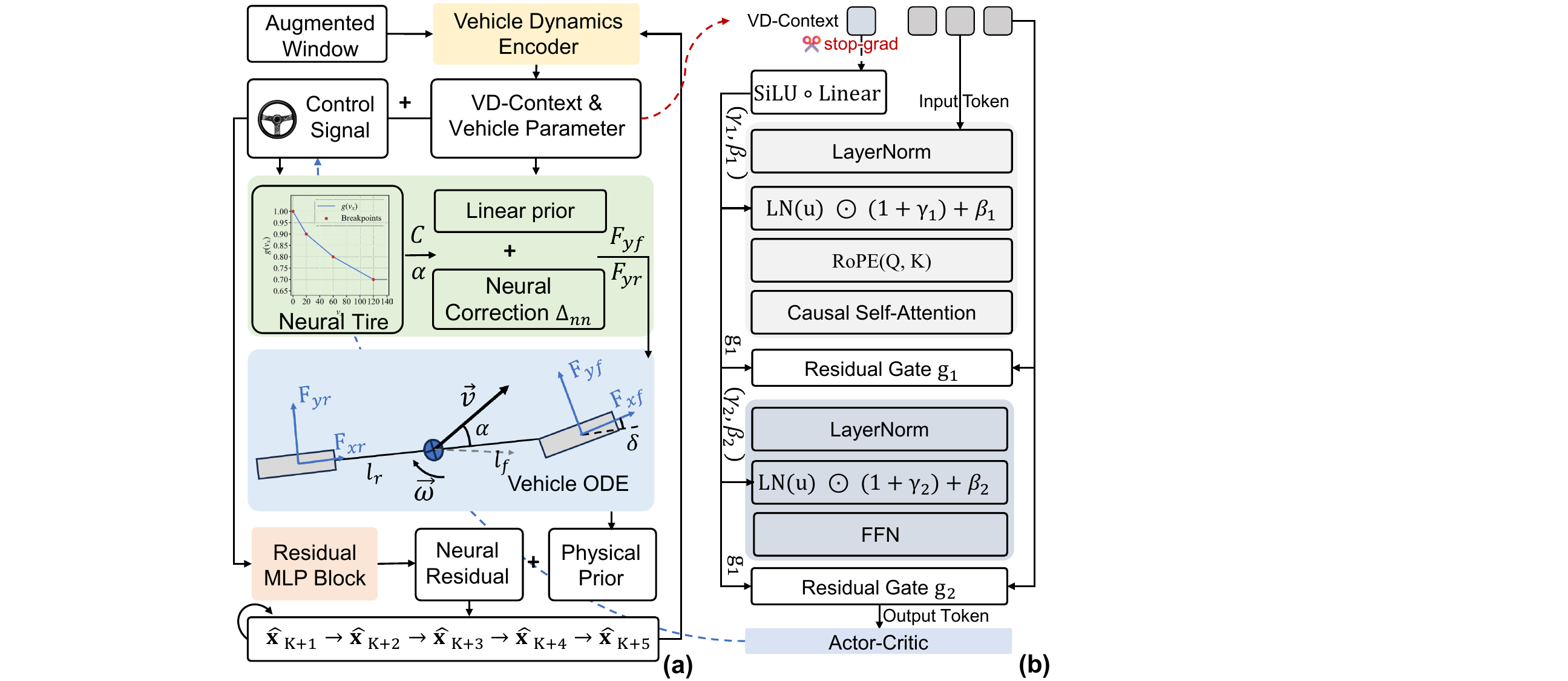}
\caption{Architectures of (a) the physics-informed Neural-ODE encoder-decoder and (b) the VD-context-modulated Transformer block. (a) An augmented $K$-step window of ego states and actions is encoded into the ego-dynamics context (VD-context) $\vdctx_t$; the dynamics predictor is driven by a neural ODE to roll out future ego states. (b) Each Transformer sub-block is wrapped with an AdaLN layer whose scale, shift, and gate are produced from a stop-gradient $\vdctx_t$, while RoPE keeps position encoding orthogonal to the modulation channel.}
\label{fig:method-arch}
\end{figure}

\subsection{Ego-Dynamics-Modulated World Model}
\label{subsec:vd-wm}
 
To reshape both the prior and posterior latents at every step, $\vdctx_t$ is integrated into the Transformer WM. The Transformer WM of Section~\ref{sec:prelim} uses homogeneous pre-norm blocks and absolute position embeddings, neither of which can adapt to the ego-dynamics heterogeneity across vehicles. Each Transformer block is therefore restructured to accept $\vdctx_t$ through AdaLN modulation, while RoPE decouples position information from the modulation channel, as depicted in Fig.~\ref{fig:method-arch}(b).
 
\subsubsection{AdaLN modulation}
Following the Diffusion Transformer (DiT)~\cite{Peebles_2023_ICCV}, every attention sub-block and feed-forward network (FFN) sub-block in the sequence model is replaced by an AdaLN layer whose affine parameters and residual gate are produced from $\vdctx_t$:
\begin{equation}
\begin{aligned}
\tilde{\mathbf{u}} &= \LN\left(\mathbf{u}\right)\odot\left(1+\boldsymbol{\gamma}\left(\vdctx_t\right)\right)+\mathbf{b}\left(\vdctx_t\right),\\
\mathbf{u} &\leftarrow \mathbf{u} + \mathbf{g}\left(\vdctx_t\right)\odot\Block\left(\tilde{\mathbf{u}}\right),
\end{aligned}
\label{eq:adaln}
\end{equation}
where $\mathbf{u}$ is the input activation of the sub-block, $\LN(\cdot)$ denotes Layer Normalization, $\Block(\cdot)$ is the attention or FFN operation, and $(\boldsymbol{\gamma},\mathbf{b},\mathbf{g})$ are the scale, shift, and gate projections for the sub-block. Modulation weights are zero-initialized so that the block collapses to an affine identity before training, while gate biases are offset by $0.5$ to keep the sub-block contribution non-zero from step zero. Modulation is chosen over input-stem concatenation deliberately: multiplicative scale, shift, and gate act on every sub-block, letting the low-dimensional $\vdctx_t$ steer the entire sequence model, whereas a concatenated context must compete with high-dimensional observation tokens and conditions the computation only at the input; Section~\ref{sec:experiments} confirms this gap. Any position encoding that carries ego-dynamics information would corrupt the modulation channel. Position encoding must therefore remain strictly orthogonal to $\vdctx_t$.
 
\subsubsection{Position embedding}
Absolute position embeddings are replaced with RoPE~\cite{su2024roformer}, a relative-position scheme that applies per-head rotations to query and key vectors as a function of token position alone. The critical design property is orthogonality: the rotation angle depends solely on token position and carries no information about $\vdctx_t$, so position encoding and AdaLN modulation operate on strictly disjoint channels. The KV cache therefore remains safely reusable as $\vdctx_t$ evolves across imagination steps.
 
\subsubsection{Training objective and detached context}
The WM loss inherits the composition of Eq.~\eqref{eq:wm-loss}; the full DynaDreamer training objective is:
\begin{equation}
\mathcal{L}^{\mathrm{DD}} = \mathcal{L}_{\mathrm{WM}}\left(\vdctx_t\leftarrow\sg\left(\vdctx_t\right)\right) + \alpha_{\mathrm{aux}}\mathcal{L}_{\mathrm{aux}} + \alpha_{\mathrm{param}}\mathcal{L}_{\mathrm{param}},
\label{eq:dd-loss}
\end{equation}
where $\sg(\cdot)$ blocks gradient flow into the VD-context encoder, and $\alpha_{\mathrm{aux}},\alpha_{\mathrm{param}}>0$ are scalar loss-weighting coefficients for the dynamics prediction and parameter-identification terms.
 
The VD-context $\vdctx_t$ is passed through a stop-gradient before entering the Transformer. This isolation prevents the WM from back-propagating reconstruction and KL gradients into the encoder $\xi$, which would collapse $\vdctx_t$ onto dynamics-irrelevant pixel statistics. The encoder is thus trained exclusively by $\mathcal{L}_{\mathrm{aux}}$ and $\mathcal{L}_{\mathrm{param}}$, preserving the physics grounding of Section~\ref{subsec:vd-enc}. $\vdctx_t$ is updated at every time step on a sliding window rather than held constant, which aligns WM learning and rollout.
 
\subsection{Ego-Dynamics-Aligned Imagination Rollout}
\label{subsec:vd-imag}
 
During training, $\vdctx_t$ is computed from the ground-truth ego-state window $\mathcal{W}_t$ at every step. Imagination introduces three problems absent from training. First, future states are produced by the model rather than observed, so $\vdctx_t$ must be propagated by the physics predictor. Second, KV cache entries written at prior steps carry no dynamics imprint unless explicitly modulated before storage, leading to inconsistency with the current-step conditioning. Third, trajectories terminate at different steps in batched imagination, and continuing to update $\vdctx_t$ for terminated samples risks corrupting surviving ones.
 
\subsubsection{Modulated KV cache for autoregressive rollout}
At every imagined step, the raw key $\mathbf{k}_t$ and value $\mathbf{v}_t$ produced by the Transformer are passed through pre-modulation $\PreMod(\cdot\,;\vdctx_t)$ to yield the modulated pair $(\tilde{\mathbf{k}}_t,\tilde{\mathbf{v}}_t)$, which is then written into the cache. Entries already in the cache remain untouched, so each token carries a permanent imprint of the ego-dynamics state under which it was produced while maintaining $\mathcal{O}(H)$ autoregressive complexity.
 
\subsubsection{Online VD-context update via ODE rollout}
At the start of a rollout, the true chassis parameters are unavailable, so $\vdctx_{K_{\mathrm{ctx}}}$ at the rollout-start index $K_{\mathrm{ctx}}$ is seeded with nominal parameters $\vparams_{\mathrm{nom}}$, where $K_{\mathrm{ctx}}$ denotes the warm-up context length distinct from the VD-context window size $K$.
From step $K_{\mathrm{ctx}}$ onward, the identified parameters $\hat\vparams_t$ produced by the encoder replace the nominal prior, and the ego state is propagated by the physics-informed predictor:
\begin{equation}
\begin{aligned}
\hat{\vstate}_{t+1} &= \dynpred\left(\vdctx_t,\hat\vstate_t,\vact_t;\hat\vparams_t\right),\\
\vdctx_{t+1} &= \Enc_\xi\left(\mathcal{W}_{t+1},\,\phi_{\mathrm{phys}}\left(\mathcal{W}_{t+1};\hat\vparams_t\right)\right),
\label{eq:online-ctx}
\end{aligned}
\end{equation}
where $\mathcal{W}_{t+1}=(\hat\vstate_{t-K+2:t+1},\,\vact_{t-K+2:t+1})$ slides forward by replacing real states with imagined ones while retaining the executed actions.
 
At deployment, a vehicle with different chassis dynamics produces a distinct ego-state window $\mathcal{W}$ that shifts $\vdctx_t$ through the encoder and thereby steers the AdaLN modulation in Eq.~\eqref{eq:adaln}. The Transformer weights $\phi$ require no retraining to adapt to the new dynamic characteristics.
 
\subsubsection{Per-sample termination freezing}
In batched imagination, samples terminate at different steps. A per-sample binary alive mask $\nu_t\in\{0,1\}$ tracks whether each trajectory is still active. When a sample terminates ($\nu_t=0$), its $\vdctx_t$ is held at the last active value rather than updated, whereas live samples ($\nu_t=1$) continue to receive the refreshed context from the ODE predictor. This selective freezing prevents post-termination states from corrupting the WM forward pass of surviving samples. The actor only takes the WM latent $(\vhid_t,\vlat_t)$, keeping the policy gradient path independent of the encoder $\xi$.

Algorithm~\ref{alg:dd-combined} summarizes training and imagination. Together, these three subsections close a loop between physics and pixels. The encoder-decoder extracts physics state into $\vdctx_t$ and $\hat\vparams_t$; the AdaLN-RoPE Transformer lets $\vdctx_t$ reshape the latent distribution while keeping encoder gradients isolated; and the modulated imagination procedure keeps $\vdctx_t$ synchronized with the ODE predictor during rollout. The actor is thereby optimized in a WM whose imagined responses are aligned with physics-driven ego-dynamics.
 
\begin{algorithm2e}[t]
\small
\DontPrintSemicolon
\SetAlgoLined
\KwIn{replay buffer $\mathcal{D}$;\ warm-up context $(\vobs_{1:K_{\mathrm{ctx}}},\vact_{1:K_{\mathrm{ctx}}},\vstate_{1:K_{\mathrm{ctx}}})$, imagination horizon $H$}
\BlankLine
\tcc{Training}
Sample a batch $(\vobs_{1:T},\vact_{1:T},r_{1:T},n_{1:T},\vstate_{1:T},\vparams^{\star})\sim\mathcal{D}$.\;
$(\vdctx_{1:T},\hat\vparams_{1:T},\mathcal{L}_{\mathrm{aux}},\mathcal{L}_{\mathrm{param}})\leftarrow\textbf{Algorithm~\ref{alg:vd-encdec}}$.\;
$\vdctx'_{1:T}\leftarrow \sg(\vdctx_{1:T})$.\tcp*{stop gradient}
Encode observations: $\mathbf{e}_{1:T}=\Embed(\vlat_{1:T},\vact_{1:T})$.\;
$\vhid_{1:T}\leftarrow f_\phi\bigl(\mathbf{e}_{1:T};\ \RoPE,\ \AdaLN(\vdctx'_{1:T})\bigr)$.\;
Decode reconstruction, reward, and continuation heads.\;
Compute $\mathcal{L}_{\mathrm{WM}}$ from Eq.~\eqref{eq:wm-loss}.\;
$\mathcal{L}^{\mathrm{DD}}\leftarrow \mathcal{L}_{\mathrm{WM}} + \alpha_{\mathrm{aux}}\mathcal{L}_{\mathrm{aux}}+\alpha_{\mathrm{param}}\mathcal{L}_{\mathrm{param}}$.\;
Update $\phi$ via $\nabla_\phi\mathcal{L}_{\mathrm{WM}}$ and $\xi$ via $\nabla_\xi\left(\mathcal{L}_{\mathrm{aux}}+\mathcal{L}_{\mathrm{param}}\right)$.\;
\BlankLine
\tcc{Imagination rollout}
Reset KV cache; encode warm-up context to obtain $\vhid_{K_{\mathrm{ctx}}},\vlat_{K_{\mathrm{ctx}}}$.\;
Initialize $\vdctx_{K_{\mathrm{ctx}}}\leftarrow\Enc_\xi(\mathcal{W}_{K_{\mathrm{ctx}}},\phi_{\mathrm{phys}}(\mathcal{W}_{K_{\mathrm{ctx}}};\vparams_{\mathrm{nom}}))$;\ \ $\hat\vstate_{K_{\mathrm{ctx}}}\leftarrow\vstate_{K_{\mathrm{ctx}}}$;\ \ $\nu_{K_{\mathrm{ctx}}}\leftarrow 1$.\;
\For{$t=K_{\mathrm{ctx}}$ \KwTo $K_{\mathrm{ctx}}+H-1$}{
  $\vact_t\sim\pi_\theta(\vact\mid \vhid_t,\vlat_t)$.\;
  $(\mathbf{k}_t,\mathbf{v}_t)\leftarrow$ Transformer step using $\AdaLN(\vdctx_t)$.\;
  Write $(\tilde{\mathbf{k}}_t,\tilde{\mathbf{v}}_t)=\PreMod(\mathbf{k}_t,\mathbf{v}_t;\vdctx_t)$ to cache.\;
  Sample $\vlat_{t+1}\sim p_\phi(\cdot\mid\vhid_{t+1})$, predict $\hat r_{t+1},\hat n_{t+1}$.\;
  $\hat{\vstate}_{t+1}\leftarrow\dynpred(\vdctx_t,\hat\vstate_t,\vact_t;\hat\vparams_t)$;\ \ slide window.\;
  $\vdctx_{t+1}\leftarrow\Enc_\xi(\mathcal{W}_{t+1},\phi_{\mathrm{phys}}(\mathcal{W}_{t+1};\hat\vparams_t))$.\;
  $\nu_{t+1}\leftarrow \nu_t\cdot(1-\hat n_{t+1})$;\ \ $\vdctx_{t+1}\leftarrow\nu_t\,\vdctx_{t+1}+(1-\nu_t)\vdctx_t$.\;
}
Form imagined trajectory $\tau=(\vhid,\vlat,\vact,\hat r,\hat n)_{K_{\mathrm{ctx}}:K_{\mathrm{ctx}}+H}$ and $\lambda$-return of Eq.~\eqref{eq:lambda-return}.\;
Update $\pi_\theta,V_\rho$ on $\tau$.\;
\caption{WM and ego-dynamics predictor joint training and dynamics-aligned imagination rollout.}
\label{alg:dd-combined}
\end{algorithm2e}

\section{Theoretical Insights into the Ego-Dynamics Prior}
\label{sec:theory}
 
This section provides an information-theoretic analysis of why augmenting the world model with an ego-dynamics prior eases learning. The analysis adopts the following assumptions: (i)~Lipschitz vehicle dynamics, (ii)~additive observation noise, (iii)~a $\beta$-mixing observation--action process~\cite{rio2017asymptotic}, (iv)~a persistently exciting policy in the classical system-identification sense~\cite{ljung1999system}, whose state--action excitation renders dynamically distinct chassis distinguishable, and (v)~conditional independence of the object-induced flow $\Delta_t$ (defined in Eq.~\eqref{eq:flow-decomp}) from the ego-dynamics context given the current observation and action. The chassis context $c$, instantiated by the physical parameter vector $\vparams$ of Section~\ref{subsec:vd-enc}, depends only on the vehicle. $H(\cdot)$ and $I(\cdot\,;\cdot)$ denote the (differential) entropy and mutual information, respectively~\cite{cover1999elements}. The results are stated as propositions with proofs that emphasize the key reasoning steps.
 
\subsection{Egocentric Observation Model}
Let $\xi_t\in SE(2)$ be the ego-vehicle pose, and let $g_t\triangleq\xi_t^{-1}\xi_{t+1}$ denote the one-step relative ego motion expressed in the current ego frame. The vehicle dynamics determine $g_t$ as $g_t=G(\vstate_t,\vact_t;c)$. The BEV is rendered around the ego vehicle, so the static background transforms rigidly with the viewpoint change. Consecutive observations therefore obey the following decomposition:
\begin{equation}
\vobs_{t+1}=\Warp_{g_t}(\vobs_t)\ \oplus\ \Delta_t,
\label{eq:flow-decomp}
\end{equation}
where $\Warp_{g_t}(\vobs_t)$ is the ego-induced flow, i.e.\ the rigid image warp of $\vobs_t$ under the ego motion $g_t$; $\oplus$ denotes pixel-wise composition; and $\Delta_t$ is the object-induced flow, collecting the genuine motion of surrounding agents together with content entering or leaving the field of view. This decomposition mirrors the classical separation of an observed flow field into camera-motion and object-motion components~\cite{683770}. The first term is fixed once $g_t$ is known; only the second reflects what the WM must genuinely predict.
 
\subsection{The Ego-Motion Modeling Burden}
\begin{proposition}[Ego-motion modeling burden]
\label{prop:decomp}
Under the egocentric model of Eq.~\eqref{eq:flow-decomp}, the conditional entropy of the next observation is decomposed as:
\begin{equation}
H(\vobs_{t+1}\mid\vobs_t,\vact_t)=I(g_t;\vobs_{t+1}\mid\vobs_t,\vact_t)+H(\vobs_{t+1}\mid\vobs_t,\vact_t,g_t),
\label{eq:entropy-decomp}
\end{equation}
where the first term is the ego-motion burden, i.e.\ the predictive uncertainty that arises solely from not knowing $g_t$. Conditioning on any statistic that determines $g_t$ removes at least this quantity from the predictive entropy, with equality for a statistic informationally equivalent to $g_t$.
\end{proposition}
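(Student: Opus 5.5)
The decomposition in Eq.~\eqref{eq:entropy-decomp} follows from the chain rule for mutual information, applied conditionally on $(\vobs_t,\vact_t)$. By definition,
\[
I(g_t;\vobs_{t+1}\mid\vobs_t,\vact_t)=H(\vobs_{t+1}\mid\vobs_t,\vact_t)-H(\vobs_{t+1}\mid\vobs_t,\vact_t,g_t).
\]
Rearranging gives the identity directly. Since $\vobs_{t+1}$ may be continuous, all entropies are differential entropies. The additive-noise assumption (ii) makes $\vobs_{t+1}$ have a density given $(\vobs_t,\vact_t,g_t)$, so both conditional entropies are finite. Their difference is then a well-defined, nonnegative mutual information, which I would state explicitly.

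The flow model of Eq.~\eqref{eq:flow-decomp} justifies calling the first term the ego-motion burden. Once $g_t$ is fixed, $\Warp_{g_t}(\vobs_t)$ is a deterministic function of the conditioning variables. The residual entropy $H(\vobs_{t+1}\mid\vobs_t,\vact_t,g_t)$ is therefore exactly the uncertainty carried by $\Delta_t$ and the noise, so the information removed is the part attributable to not knowing $g_t$.

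For the second claim, let $S$ be any statistic that determines $g_t$, i.e.\ $g_t=\varphi(S,\vobs_t,\vact_t)$ for some measurable $\varphi$; the ego-dynamics context $\vdctx_t$, through $G(\vstate_t,\vact_t;c)$, is the intended case. Then
\[
H(\vobs_{t+1}\mid\vobs_t,\vact_t,S)=H(\vobs_{t+1}\mid\vobs_t,\vact_t,S,g_t)\le H(\vobs_{t+1}\mid\vobs_t,\vact_t,g_t),
\]
where the equality holds because $g_t$ is a function of the conditioning and the inequality because conditioning reduces entropy. Subtracting from $H(\vobs_{t+1}\mid\vobs_t,\vact_t)$ and using Eq.~\eqref{eq:entropy-decomp}, the entropy reduction from conditioning on $S$ is at least $I(g_t;\vobs_{t+1}\mid\vobs_t,\vact_t)$.

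Equality is the delicate step. The inequality above is tight exactly when $I(S;\vobs_{t+1}\mid\vobs_t,\vact_t,g_t)=0$, i.e.\ when $S$ carries no information about $\vobs_{t+1}$ beyond $g_t$. I would read ``informationally equivalent to $g_t$'' as $\sigma(S,\vobs_t,\vact_t)=\sigma(g_t,\vobs_t,\vact_t)$, so each is a function of the other given $(\vobs_t,\vact_t)$. The two conditional entropies then coincide trivially. Under the weaker reading $S\to g_t\to\vobs_{t+1}$, assumption (v) supplies the required conditional independence: $\Delta_t$ is independent of the ego-dynamics context given $(\vobs_t,\vact_t)$, and $\vobs_{t+1}$ depends on $S$ only through $\Warp_{g_t}$ and $\Delta_t$. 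Hence $I(S;\vobs_{t+1}\mid\vobs_t,\vact_t,g_t)=0$. Checking that (v) is strong enough here is the main point requiring care. If it is not, I would restrict the equality claim to the strict functional-equivalence reading.
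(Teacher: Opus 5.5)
Your proof is correct and takes the same route as the paper's. The decomposition is the chain rule (the definition of conditional mutual information), the interpretation rests on $\Warp_{g_t}$ being deterministic given $g_t$, and the ``at least'' claim follows because conditioning cannot increase entropy. Your explicit step $H(\vobs_{t+1}\mid\vobs_t,\vact_t,S)=H(\vobs_{t+1}\mid\vobs_t,\vact_t,S,g_t)$ and your treatment of the equality case make precise what the paper leaves implicit.

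On the point you flagged, assumption~(v) does suffice when $S$ is the ego-dynamics context. Since $g_t$ is a function of $(S,\vobs_t,\vact_t)$, weak union upgrades $\Delta_t\perp S\mid(\vobs_t,\vact_t)$ to $\Delta_t\perp S\mid(\vobs_t,\vact_t,g_t)$, which gives $I(S;\vobs_{t+1}\mid\vobs_t,\vact_t,g_t)=0$. Your side remark that a conditional density alone makes the differential entropies finite is an overstatement, but the argument does not depend on it.
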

\noindent\textit{Proof.} The decomposition is a direct application of the chain rule of conditional entropy. The warp $\Warp_{g_t}$ is deterministic given $g_t$, so the mutual information term is precisely the burden a pixel-only model spends on ego-motion. The entropy reduction under additional conditioning holds because conditioning cannot increase entropy~\cite{cover1999elements}. \hfill$\square$
 
\begin{proposition}[Geometric amplification and chassis multimodality]
\label{prop:amp}
The ego-motion burden is (i)~geometrically amplified, as the warp displaces content at scene radius $\rho$ by $\approx\rho\,|\Delta\theta_t|$ for an ego rotation $\Delta\theta_t$, and (ii)~chassis-dependent:
\begin{equation}
H(\vobs_{t+1}\!\mid\!\vobs_t,\vact_t)=H(\vobs_{t+1}\!\mid\!\vobs_t,\vact_t,c)+I(c;\vobs_{t+1}\!\mid\!\vobs_t,\vact_t),
\label{eq:chassis-decomp}
\end{equation}
where the equality is the conditional mutual information identity applied to $c$. The non-trivial claim is $I(c;\vobs_{t+1}\mid\vobs_t,\vact_t)>0$ whenever dynamically distinct chassis are excited.
\end{proposition}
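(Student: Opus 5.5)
The plan splits along the two claims of the statement; only part (ii) needs real work. For (i), I will write the ego motion as $g_t=(R(\Delta\theta_t),\mathbf{p}_t)\in SE(2)$. A static point at ego-frame position $\mathbf{r}$ with $\|\mathbf{r}\|=\rho$ maps under $\Warp_{g_t}$ to $R(\Delta\theta_t)^{-1}(\mathbf{r}-\mathbf{p}_t)$. Expanding $R(\Delta\theta)=I+\Delta\theta J+O(\Delta\theta^2)$, with $J$ the $90^\circ$ rotation generator, the displacement is $-\mathbf{p}_t-\Delta\theta_t J\mathbf{r}+O(\Delta\theta_t^2\rho+|\Delta\theta_t|\|\mathbf{p}_t\|)$. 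Its rotational part has norm $\rho|\Delta\theta_t|$, independent of the translation. This gives the $\approx\rho|\Delta\theta_t|$ displacement, growing linearly toward the BEV border. To connect it to the entropy burden of Proposition~\ref{prop:decomp}, I will note the following. Under additive observation noise (ii), a displacement much larger than the noise scale makes $g_t\mapsto\Warp_{g_t}(\vobs_t)$ injective at that resolution. The term $I(g_t;\vobs_{t+1}\mid\vobs_t,\vact_t)$ then approaches $H(g_t\mid\vobs_t,\vact_t)$ quantized at resolution $\sigma/\rho$. It therefore grows roughly like $\log\rho$ per degree of freedom. I would present this as a scaling statement rather than a sharp bound.

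For (ii), the identity in Eq.~\eqref{eq:chassis-decomp} is just $I(c;Y\mid Z)=H(Y\mid Z)-H(Y\mid Z,c)$ with $Y=\vobs_{t+1}$ and $Z=(\vobs_t,\vact_t)$, so nothing more is needed there. The substantive step is positivity. I will argue by contradiction: suppose $I(c;\vobs_{t+1}\mid\vobs_t,\vact_t)=0$. Then the conditional law of $\vobs_{t+1}$ given $(\vobs_t,\vact_t)$ is the same for all chassis values $c$ on a set of positive prior mass. The route runs through $g_t$ via Eq.~\eqref{eq:flow-decomp}. First, assumption (v) makes $\Delta_t$ conditionally independent of the chassis given $(\vobs_t,\vact_t)$, so chassis dependence can enter only through $\Warp_{g_t}$. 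Second, the additive noise makes the observation channel from $g_t$ to $\vobs_{t+1}$ injective in distribution: the Fourier transform of the Gaussian (or any nonvanishing-characteristic-function) noise allows deconvolution. Equal laws of $\vobs_{t+1}$ therefore force equal conditional laws of $g_t=G(\vstate_t,\vact_t;c)$.

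The remaining task is to show that two dynamically distinct chassis $c\neq c'$ cannot induce the same law of $G(\vstate_t,\vact_t;\cdot)$ given $(\vobs_t,\vact_t)$. This is where persistent excitation (iv) enters. By definition, two chassis are dynamically distinct when $G(\cdot,\cdot;c)\neq G(\cdot,\cdot;c')$ on some open set $U$ of state--action pairs. Lipschitz continuity of $G$ (assumption (i)) makes the difference bounded away from zero on a neighborhood. Persistent excitation guarantees that the policy visits $U$ with positive probability. The $\beta$-mixing assumption (iii) makes this a stationary-like statement, so the positive probability holds at a generic time $t$ rather than only on average. On that event the laws differ, which contradicts the supposition, so the mutual information is strictly positive.

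I expect the main obstacle to be the fact that $\vstate_t$ is not directly observed. The conditioning is on $(\vobs_t,\vact_t)$, so $G$ is evaluated at a latent state whose conditional law may itself depend on $c$. In principle this could produce cancellation between chassis. I would first try to rule this out by conditioning additionally on $\vstate_t$ and using the chain rule: $I(c;\vobs_{t+1}\mid\vobs_t,\vact_t)\ge I(c;\vobs_{t+1}\mid\vobs_t,\vact_t,\vstate_t)-I(c;\vstate_t\mid\vobs_t,\vact_t,\vobs_{t+1})$. The first term is positive by the argument above. The subtracted term should be controlled by the observability of velocity from consecutive BEV frames. If that inequality proves too loose, I would instead restrict to the excited set $U$ and show directly that the mixture over $\vstate_t$ of two distinct pushforwards cannot coincide for all $\vobs_t$, since $\vobs_t$ varies independently under mixing.
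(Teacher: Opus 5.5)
Your skeleton matches the paper's. The identity is the chain rule, and positivity is argued as ``(i) and (iv) make distinct chassis produce distinct $g_t$ under the same action, hence distinct warps.'' That is the whole of the paper's proof. It gives no derivation of part (i); your small-angle computation is correct and fine to include. It also never addresses the latent-state issue you raise, because it implicitly compares chassis at a common ego state.

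The deconvolution step does not follow from the assumptions you cite. Assumption (v) is stated for the context $\vdctx_t$, not for $c$. Even read as $\Delta_t\perp c\mid(\vobs_t,\vact_t)$, it does not make $\Delta_t$ independent of $g_t$: by definition $\Delta_t$ contains content entering or leaving the field of view, which depends on $g_t$. In addition, $\oplus$ is a pixel-wise composition rather than an independent additive channel. So removing the observation noise does not isolate the law of $\Warp_{g_t}(\vobs_t)$. A cleaner replacement is the recoverability of $g_t$ from $(\vobs_t,\vobs_{t+1})$ that the paper uses in the proof of Proposition~\ref{prop:bottleneck}. Conditional data processing then gives $I(c;\vobs_{t+1}\mid\vobs_t,\vact_t)\ge I(c;g_t\mid\vobs_t,\vact_t)$ directly, and everything reduces to positivity of the right-hand side.

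That positivity is exactly the obstacle you flag, and it is a genuine gap that neither of your remedies closes.
\begin{itemize}
\item Your chain-rule inequality is valid but discards $I(c;\vstate_t\mid\vobs_t,\vact_t)$. Two frames determine at most the three-dimensional $g_t$, not the six-dimensional $\vstate_t$ (in particular not $a_x,a_y,\dot\omega$). So the subtracted term $I(c;\vstate_t\mid\vobs_t,\vact_t,\vobs_{t+1})$ need not be small. If $\vstate_t$ nearly identifies $c$, the first term is near zero while the subtracted one is not, and the bound is vacuous.
\item Your second remedy leans on mixing, which controls temporal dependence and says nothing about the conditional law of $\vstate_t$ given $\vobs_t$.
\end{itemize}

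More fundamentally, positivity does not follow from (i) and (iv) in their state--action form. It needs two further conditions.
\begin{itemize}
\item[(a)] The conditional law of $c$ given $(\vobs_t,\vact_t)$ must be non-degenerate. This is not automatic, since the ego-vehicle layer renders the vehicle footprint.
\item[(b)] The mixtures $\int\delta_{G(x,\vact_t;c)}\,dP(x\mid\vobs_t,\vact_t,c)$ must not coincide across chassis.
\end{itemize}
Condition (b) can fail. Suppose that under $c'$ the conditional velocity law is a shift of that under $c$ which exactly compensates the different one-step response. Then the laws of $g_t$ coincide, and the mutual information is zero despite distinct, excited dynamics. The honest fix is to state (a) and (b) as an explicit observation-level distinguishability assumption, which is what the paper tacitly reads into (iv).
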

\noindent\textit{Proof.} The equality follows from the chain rule of conditional entropy. The strictly positive mutual information holds under assumptions~(i) and~(iv): distinct chassis produce distinct ego-motions $g_t$ under the same action, which in turn induce distinct observation warps. A pixel-only WM must therefore identify $c$ implicitly to remain Markov in $(\vobs_t,\vact_t)$. \hfill$\square$
 
\subsection{The Prior--Posterior Bottleneck and Its Resolution}
The Transformer WM exposes the full history through its KV cache, and the bottleneck therefore lies not in model capacity but in the information asymmetry between prior and posterior. The posterior has access to $\vobs_{t+1}$ and can therefore recover the realized ego-motion $g_t$ via Eq.~\eqref{eq:flow-decomp}. The observation-free prior, by contrast, must predict $g_t$ from history alone, creating an irreducible misalignment whenever $g_t$ varies across chassis.
 
\begin{proposition}[Prior--posterior bottleneck]
\label{prop:bottleneck}
For the latent WM with an optimal observation-free prior, the minimal expected dynamics KL at step $t{+}1$ satisfies:
\begin{equation}
\begin{aligned}
\min_{p}\ \mathbb{E}&\big[\KL\!\left(q_\phi(\vlat_{t+1}\mid\vhid_{t+1},\vobs_{t+1})\,\|\,p(\vlat_{t+1}\mid\vhid_{t+1})\right)\big]\\
&=I(\vlat_{t+1};\vobs_{t+1}\mid\vhid_{t+1})\ \ge\ I(\vlat_{t+1};g_t\mid\vhid_{t+1}).
\end{aligned}
\label{eq:kl-bound}
\end{equation}
Under chassis randomization, $\vhid_{t+1}$ does not determine $g_t$, so $I(\vlat_{t+1};g_t\mid\vhid_{t+1})>0$ is an irreducible prior--posterior misalignment. Conditioning the prior on $\vdctx_t$ (the VD-context of Section~\ref{subsec:vd-enc}) lowers the achievable bound to $I(\vlat_{t+1};\vobs_{t+1}\mid\vhid_{t+1},\vdctx_t)$, removing exactly the ego-motion misalignment:
\begin{equation}
\begin{aligned}
\Delta_{\KL}&=I(\vlat_{t+1};\vobs_{t+1}\mid\vhid_{t+1})-I(\vlat_{t+1};\vobs_{t+1}\mid\vhid_{t+1},\vdctx_t)\\
&=I(\vlat_{t+1};g_t\mid\vhid_{t+1})>0.
\end{aligned}
\label{eq:delta-kl}
\end{equation}
\end{proposition}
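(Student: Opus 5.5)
The plan proceeds in three steps: identify the optimal prior, compare the two mutual-information terms through the chain rule, and then repeat the argument with $\vdctx_t$ added to the conditioning set.

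\textbf{Step 1: the optimal prior.} Fix the posterior $q_\phi$ and treat the prior as an arbitrary conditional density $p(\vlat_{t+1}\mid\vhid_{t+1})$. Write $\bar q(\vlat_{t+1}\mid\vhid_{t+1})\triangleq\mathbb{E}_{\vobs_{t+1}\mid\vhid_{t+1}}[q_\phi(\vlat_{t+1}\mid\vhid_{t+1},\vobs_{t+1})]$ for the aggregate posterior. The standard decomposition is
\begin{equation*}
\mathbb{E}[\KL(q_\phi\,\|\,p)]=\mathbb{E}[\KL(q_\phi\,\|\,\bar q)]+\mathbb{E}[\KL(\bar q\,\|\,p)].
\end{equation*}
The second term is nonnegative and vanishes at $p=\bar q$, so that choice is the minimizer. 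The first term is, by definition, $I(\vlat_{t+1};\vobs_{t+1}\mid\vhid_{t+1})$ under the joint law induced by $q_\phi$. This establishes the equality in Eq.~\eqref{eq:kl-bound}.

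\textbf{Step 2: the lower bound.} By Eq.~\eqref{eq:flow-decomp} and the argument of Proposition~\ref{prop:decomp}, the posterior recovers $g_t$ from $(\vobs_t,\vobs_{t+1})$, and $\vobs_t$ is summarized in $\vhid_{t+1}$. We therefore treat $g_t$ as a function of $(\vhid_{t+1},\vobs_{t+1})$. The chain rule then gives
\begin{equation*}
I(\vlat_{t+1};\vobs_{t+1}\mid\vhid_{t+1})=I(\vlat_{t+1};\vobs_{t+1},g_t\mid\vhid_{t+1})=I(\vlat_{t+1};g_t\mid\vhid_{t+1})+I(\vlat_{t+1};\vobs_{t+1}\mid\vhid_{t+1},g_t).
\end{equation*}
The last term is nonnegative, which yields the inequality.

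For strict positivity under chassis randomization, we invoke Proposition~\ref{prop:amp}. Distinct chassis produce distinct $g_t$, so $g_t$ is not $\vhid_{t+1}$-measurable. Provided the posterior latent actually encodes $g_t$ (it must, for reconstruction of the warp), it follows that $I(\vlat_{t+1};g_t\mid\vhid_{t+1})>0$.

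\textbf{Step 3: the $\Delta_{\KL}$ identity.} Rerun Step 1 with $(\vhid_{t+1},\vdctx_t)$ as the conditioning variable to obtain the lowered bound $I(\vlat_{t+1};\vobs_{t+1}\mid\vhid_{t+1},\vdctx_t)$. Then expand both bounds via the chain rule on $g_t$, as in Step 2. Two facts are needed:
\begin{itemize}
\item[(a)] $\vdctx_t$ is informationally equivalent to $g_t$ given $\vhid_{t+1}$. This follows because $g_t=G(\vstate_t,\vact_t;c)$ and, by assumptions (i) and (iv), the identified $\vdctx_t$ together with $(\vstate_t,\vact_t)$ pins down $c$. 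Consequently $I(\vlat_{t+1};g_t\mid\vhid_{t+1},\vdctx_t)=0$.
\item[(b)] The residual terms agree: $I(\vlat_{t+1};\vobs_{t+1}\mid\vhid_{t+1},g_t)=I(\vlat_{t+1};\vobs_{t+1}\mid\vhid_{t+1},g_t,\vdctx_t)$. This is where assumption (v) enters. Given $g_t$, the remaining content of $\vobs_{t+1}$ is $\Delta_t$, which is conditionally independent of $\vdctx_t$, so $\vdctx_t$ carries no further information.
\end{itemize}
Subtracting the two expansions leaves $\Delta_{\KL}=I(\vlat_{t+1};g_t\mid\vhid_{t+1})$, which is positive by Step 2.

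\textbf{Main obstacle.} I expect Step 3 to be the hard part. Fact (a) requires exact identifiability of $g_t$ from $\vdctx_t$, which only holds in an idealized limit; I would state it as an explicit sufficiency hypothesis on $\vdctx_t$. Fact (b) requires turning assumption (v), stated for $\Delta_t$ given $(\vobs_t,\vact_t)$, into the conditional independence $\vlat_{t+1}\perp\vdctx_t\mid(\vhid_{t+1},g_t,\vobs_{t+1})$ together with $\vobs_{t+1}\perp\vdctx_t\mid(\vhid_{t+1},g_t)$. I would first try to derive these from the Markov structure $\vdctx_t\to g_t\to\Warp_{g_t}(\vobs_t)$, using the fact that $\vlat_{t+1}$ depends on $\vdctx_t$ only through $\vobs_{t+1}$. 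Without exact identifiability, the argument still delivers $\Delta_{\KL}\ge I(\vlat_{t+1};g_t\mid\vhid_{t+1})-I(\vlat_{t+1};g_t\mid\vhid_{t+1},\vdctx_t)$.
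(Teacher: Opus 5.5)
Your proposal is correct at the paper's level of rigor, and Steps 1--2 match the paper's proof. The paper cites the aggregated-posterior identity that you derive from the KL decomposition. Its ``reconstruction-sufficiency'' step is your observation that $g_t$ is a function of $(\vhid_{t+1},\vobs_{t+1})$.

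Step 3 takes a different route. The paper applies the chain rule to the pair $(\vobs_{t+1},\vdctx_t)$ to write $\Delta_{\KL}=I(\vlat_{t+1};\vdctx_t\mid\vhid_{t+1})-I(\vlat_{t+1};\vdctx_t\mid\vhid_{t+1},\vobs_{t+1})$. It then drops the second term and converts the first into $I(\vlat_{t+1};g_t\mid\vhid_{t+1})$ via assumption (v). You instead expand both bounds through $g_t$ and cancel the residuals.

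The paper's route is shorter. With the posterior held fixed, its second term vanishes structurally, because $q_\phi$ never sees $\vdctx_t$. However, its conversion of the first term tacitly needs two facts: $I(\vlat_{t+1};g_t\mid\vhid_{t+1},\vdctx_t)=0$ (sufficiency) and $I(\vlat_{t+1};\vdctx_t\mid\vhid_{t+1},g_t)=0$ (assumption (v)). The paper instead credits sufficiency to the second term.

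Your decomposition puts each hypothesis where it belongs. Your (a) is exactly the first fact. Since $I(\vlat_{t+1};\vdctx_t\mid\vhid_{t+1},g_t,\vobs_{t+1})=0$ structurally, your (b) is equivalent to exactly the second. The hypotheses are therefore the same in both proofs. Your version makes the dependence explicit, and it yields the fallback inequality you state when sufficiency fails.

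Two cautions, both shared with the paper. First, your justification of (a) only makes $g_t$ a function of $(\vdctx_t,\vstate_t,\vact_t)$, and $\vstate_t$ need not be recoverable from $\vhid_{t+1}$. Your explicit hypothesis should therefore read ``$g_t$ is a function of $(\vhid_{t+1},\vdctx_t)$''. Second, (v) as stated conditions on $(\vobs_t,\vact_t)$, and it does not by itself give independence given $(\vhid_{t+1},g_t)$. You need a strengthened (v) or an explicit Markov argument, as you rightly flag.
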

\noindent\textit{Proof.} The optimal observation-free prior is the aggregated posterior $p^\star(\vlat_{t+1}\mid\vhid_{t+1})=\mathbb{E}_{\vobs_{t+1}}[q_\phi(\vlat_{t+1}\mid\vhid_{t+1},\vobs_{t+1})]$, for which the average KL equals $I(\vlat_{t+1};\vobs_{t+1}\mid\vhid_{t+1})$~\cite{hoffman2016elbo}. Reconstruction-sufficiency with Eq.~\eqref{eq:flow-decomp} gives the lower bound, as $g_t$ is recoverable from $(\vobs_t,\vobs_{t+1})$ and $\vobs_t$ is summarized in $\vhid_{t+1}$. For the KL reduction, the chain rule gives $\Delta_{\KL}=I(\vlat_{t+1};\vdctx_t\mid\vhid_{t+1})-I(\vlat_{t+1};\vdctx_t\mid\vhid_{t+1},\vobs_{t+1})$. The second term vanishes: once $\vobs_{t+1}$ is observed, $g_t$ is determined, and $\vdctx_t$ provides no further information about $\vlat_{t+1}$ whenever $\vdctx_t$ is sufficient for $g_t$; Proposition~\ref{prop:suff} below establishes that training drives $\vdctx_t$ toward exactly this sufficiency. The first term reduces to $I(\vlat_{t+1};g_t\mid\vhid_{t+1})$ by assumption~(v), which ensures that $\vdctx_t$ informs $\vlat_{t+1}$ only through $g_t$. \hfill$\square$
 
\begin{proposition}[Sufficiency of the ego-dynamics prior]
\label{prop:suff}
Under universal approximation of the encoder $\Enc_\xi$ and predictor $\dynpred$, and a persistently exciting window of length $K$ that renders the chassis identifiable from the state--action history, the physics-informed predictor of Section~\ref{subsec:vd-enc}, trained to minimize $\mathcal{L}_{\mathrm{aux}}$ and $\mathcal{L}_{\mathrm{param}}$, makes $\vdctx_t$ sufficient for $g_t$ in the limit of vanishing training loss, i.e.\ $I(g_t;\vobs_{t+1}\mid\vobs_t,\vact_t,\vdctx_t)\to 0$ as $\mathcal{L}_{\mathrm{aux}}\to 0$. The reductions in Propositions~\ref{prop:decomp} and~\ref{prop:bottleneck} are therefore attained.
\end{proposition}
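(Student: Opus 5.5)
The plan has three steps: show that vanishing $\mathcal{L}_{\mathrm{aux}}$ forces $(\vdctx_t,\vstate_t,\vact_t)$ to determine $\vstate_{t+1}$; lift this to a statement about $g_t$ via assumption~(i); and convert the resulting determinism into the vanishing conditional mutual information.

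\emph{Zero loss gives a deterministic predictor.} Assumption~(ii) says the observation noise is additive, so the residual of $\mathcal{L}_{\mathrm{aux}}$ at its infimum is a noise floor. Subtracting that floor, I interpret $\mathcal{L}_{\mathrm{aux}}\to0$ as $L^1$ convergence of $\dynpred(\vdctx_t,\vstate_t,\vact_t;\hat\vparams_t)$ to the noise-free $\vstate_{t+1}$ along the data distribution. Universal approximation guarantees that this infimum is attained by some encoder--predictor pair. Persistent excitation over the window $K$ (assumption~(iv)) makes $c$ a measurable function of $\mathcal{W}_t$. Hence there is an encoder with $\vdctx_t=\Enc_\xi(\mathcal{W}_t,\cdot)$ from which $c$ can be read off. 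Vanishing $\mathcal{L}_{\mathrm{param}}$ supports this, since then $\hat\vparams_t\to\vparams^\star$, i.e.\ $c$ is decoded from $\vdctx_t$. Passing to an $L^1$ limit and then an a.s.\ subsequence, the limiting predictor satisfies $\vstate_{t+1}=F(\vdctx_t,\vstate_t,\vact_t)$ almost surely.

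\emph{From the state to the ego motion $g_t$.} The paper defines $g_t=G(\vstate_t,\vact_t;c)$, and $G$ is Lipschitz by assumption~(i). Because $c$ is a function of $\vdctx_t$, $g_t$ is a measurable function of $(\vdctx_t,\vstate_t,\vact_t)$. The complication is that $\vstate_t$ is not among the conditioning variables $(\vobs_t,\vact_t,\vdctx_t)$. I would handle this by noting that $\vdctx_t$ is computed from $\mathcal{W}_t$, which contains $\vstate_t$. Since the encoder is universal, one can append $\vstate_t$ to the context without loss, so $\sigma(\vstate_t)\subseteq\sigma(\vdctx_t)$. Therefore $g_t$ is $\sigma(\vobs_t,\vact_t,\vdctx_t)$-measurable.

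\emph{From determinism to zero mutual information.} Once $g_t$ is determined by the conditioning variables, $H(g_t\mid\vobs_t,\vact_t,\vdctx_t)=0$ for discrete $g_t$. For continuous $g_t$, I would use the data-processing form $I(g_t;\vobs_{t+1}\mid Y)\le I(g_t;\vobs_{t+1}\mid Y,g_t)=0$, which avoids differential-entropy pathologies; here $Y=(\vobs_t,\vact_t,\vdctx_t)$. This gives $I(g_t;\vobs_{t+1}\mid\vobs_t,\vact_t,\vdctx_t)=0$ in the limit.

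The main obstacle is continuity: passing from ``$\mathcal{L}_{\mathrm{aux}}$ small'' to ``the mutual information is small'', rather than only equal to zero at the exact limit. Mutual information is not continuous under $L^1$ perturbations. I would first try to bound the posterior spread of $g_t$ by the Lipschitz constant times the prediction error. I would then invoke assumption~(ii): convolving with the Gaussian observation noise makes $I(g_t;\vobs_{t+1}\mid\cdot)$ continuous in the conditional law of $g_t$, for example via a Wasserstein bound on conditional entropy under Gaussian smoothing. This yields $I\lesssim C\,\mathcal{L}_{\mathrm{aux}}\to0$.

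With sufficiency established, the conditioning statistic is informationally equivalent to $g_t$ given $(\vobs_t,\vact_t)$. Equality then holds in Proposition~\ref{prop:decomp}. The sufficiency premise used to kill the second term in the proof of Proposition~\ref{prop:bottleneck} is also satisfied, so both reductions are attained.
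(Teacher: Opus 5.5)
Your outline matches the paper's: identifiability plus universal approximation, then $g_t$ determined by $\vdctx_t$ and the current state--action pair, then vanishing conditional mutual information. The gap is that nothing in your argument makes $\mathcal{L}_{\mathrm{aux}}$ the loss that forces sufficiency. In the paper's proof, $\mathcal{L}_{\mathrm{aux}}\to0$ means $\dynpred$ recovers $\vstate_{t+1}$ from $(\vdctx_t,\vstate_t,\vact_t)$, and this recovered motion pins down $g_t$. The chassis then needs to be identified only up to dynamics-equivalent parameter combinations, and $g_t$ does not distinguish among them.

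In your proposal, the output of the first step, $\vstate_{t+1}=F(\vdctx_t,\vstate_t,\vact_t)$, is never used. The second step gets determinism from ``$c$ is a function of $\vdctx_t$'', i.e.\ from exact identifiability together with $\mathcal{L}_{\mathrm{param}}\to0$. That is sufficiency driven by $\mathcal{L}_{\mathrm{param}}$, not the stated limit $\mathcal{L}_{\mathrm{aux}}\to0$. It also fails in exactly the case the paper's proof accommodates and its experiments stress, where parameters are identifiable only up to coupled combinations. There $\mathcal{L}_{\mathrm{param}}$ cannot vanish, while $\mathcal{L}_{\mathrm{aux}}\to0$ is still attainable and $g_t$ is still pinned down.

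What you need is an explicit bridge from predicted state to ego motion. One option is $g_t=\Gamma(\vstate_t,\vact_t,\vstate_{t+1})$ via one-step kinematic integration of $(v_x,v_y,\omega)$. Another is to argue that a predictor exact on the excited action set must encode the equivalence class of $c$. Your rate $I\lesssim C\,\mathcal{L}_{\mathrm{aux}}$ needs the same bridge. As written, Lipschitzness of $G$ bounds the conditional spread of $g_t$ by the uncertainty in $c$ given $\vdctx_t$. That is a parameter-identification error, not a next-state prediction error, so the bound does not follow.

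You are right that $\vstate_t$ is not among the conditioning variables $(\vobs_t,\vact_t,\vdctx_t)$. The paper's proof passes over this when it moves from ``a deterministic function of $\vdctx_t$ and the current state--action pair'' to the stated mutual information. Your patch, however, is an added architectural assumption, not a consequence of training. $\dynpred$ receives $\vstate_t$ directly, so neither loss rewards storing it in $\vdctx_t$; universal approximation only says the encoder could.

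Finally, $I(g_t;\vobs_{t+1}\mid Y)\le I(g_t;\vobs_{t+1}\mid Y,g_t)$ is not a data-processing inequality and is false in general. Here it is harmless only because $\sigma(Y,g_t)=\sigma(Y)$ once $g_t$ is $Y$-measurable, and that already gives $I(g_t;\vobs_{t+1}\mid Y)=0$ directly.
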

\noindent\textit{Proof.} Under the persistent-excitation assumption~(iv), a window of $K$ consecutive state--action pairs $\mathcal{W}_t$ identifies the chassis context $c$ up to the intrinsic ambiguity of the dynamics, i.e.\ up to dynamics-equivalent parameter combinations~\cite{ljung1999system}; the $\beta$-mixing assumption~(iii) ensures that finite windows are statistically representative of the excitation~\cite{rio2017asymptotic}. The universal-approximation capacity of $\Enc_\xi$ ensures that the mapping from $\mathcal{W}_t$ to $(\vdctx_t,\hat\vparams_t)$ can represent the required inverse identification map. When $\mathcal{L}_{\mathrm{aux}}\to 0$, the predictor $\dynpred$ recovers the true next state $\vstate_{t+1}$ from $(\vdctx_t,\vstate_t,\vact_t)$, which determines $g_t=G(\vstate_t,\vact_t;c)$. Hence $g_t$ becomes a deterministic function of $\vdctx_t$ and the current state--action pair, yielding $I(g_t;\vobs_{t+1}\mid\vobs_t,\vact_t,\vdctx_t)\to 0$. \hfill$\square$
 
\begin{corollary}[Bounded imagination drift]
\label{cor:drift}
During imagination only the prior is rolled out (Section~\ref{sec:prelim}). Without an ego-dynamics prior, the unpredictable ego-motion error compounds over the horizon~$H$. Refreshing $\vdctx_t$ through the online ODE predictor of Section~\ref{subsec:vd-imag} makes $g_t$ predictable from the prior at each step. Each step's ego-motion prediction error then reduces to the ODE predictor error $\epsilon_{\mathrm{ode}}$. If the one-step error propagation of the closed-loop predictor has Lipschitz constant $L$, the cumulative drift over horizon $H$ is bounded by $\mathcal{O}\bigl(\tfrac{L^{H}-1}{L-1}\,\epsilon_{\mathrm{ode}}\bigr)$, which reduces to $\mathcal{O}(H\,\epsilon_{\mathrm{ode}})$ in the non-expansive regime $L\le 1$ that a dynamically stable chassis operating below its handling limit maintains.
\end{corollary}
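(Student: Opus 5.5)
The plan is to treat the imagination rollout as a discrete-time error recursion and unroll it. Let $e_k$ denote the ego-motion (equivalently, ego-state) discrepancy between the imagined and true trajectories after $k$ imagined steps from the rollout origin, with $e_0=0$ because the rollout is seeded from the ground-truth $\vstate_{K_{\mathrm{ctx}}}$ (Algorithm~\ref{alg:dd-combined}).

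\textbf{Step 1: one-step recursion.} By Proposition~\ref{prop:suff}, in the vanishing-loss regime $\vdctx_t$ is sufficient for $g_t$, so the only residual ego-motion error introduced at a step is the one-step predictor error $\epsilon_{\mathrm{ode}}$. This is a uniform bound on $\|\dynpred(\vdctx_t,\vstate_t,\vact_t;\hat\vparams_t)-\vstate_{t+1}\|$ along the true trajectory. By Proposition~\ref{prop:bottleneck}, conditioning the prior on $\vdctx_t$ removes the irreducible term $I(\vlat_{t+1};g_t\mid\vhid_{t+1})$, so no additional information-theoretic error enters through the prior. The predictor is then evaluated at the imagined state, and the context is recomputed from the imagined window via Eq.~\eqref{eq:online-ctx}. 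I decompose the step error by the triangle inequality into a propagation term and a fresh one-step error. The propagation term compares the predictor at the imagined versus the true state and context, and is bounded by $L\,e_k$ using the assumed closed-loop Lipschitz constant. Here $L$ absorbs both the Lipschitz dynamics of assumption~(i) and the Lipschitz dependence of $\Enc_\xi$ on the window. This gives
\[
e_{k+1}\le L\,e_k+\epsilon_{\mathrm{ode}}.
\]

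\textbf{Step 2: unroll.} Induction on the recursion yields $e_H\le\sum_{j=0}^{H-1}L^j\epsilon_{\mathrm{ode}}=\frac{L^H-1}{L-1}\epsilon_{\mathrm{ode}}$ for $L\neq1$. For $L\le1$ each summand is at most $\epsilon_{\mathrm{ode}}$, giving $H\epsilon_{\mathrm{ode}}$; the case $L=1$ is the limit of the same sum. Because the warp $\Warp_{g_t}$ in Eq.~\eqref{eq:flow-decomp} depends Lipschitz-continuously on $g_t=G(\vstate_t,\vact_t;c)$, the bound transfers, up to a constant, to drift in the ego-induced component of the imagined observations and latents. This is the $\mathcal{O}(\cdot)$ statement. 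For contrast, the unconditioned case lacks the fresh-error bound: its per-step error is governed by the nonvanishing term $I(\vlat_{t+1};g_t\mid\vhid_{t+1})$, which compounds without being tied to $\epsilon_{\mathrm{ode}}$.

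\textbf{Main obstacle.} The hard part is justifying the propagation bound in Step 1. The context $\vdctx_t$ is recomputed from a window that mixes imagined states, so the error at step $k$ depends on the previous $K$ errors rather than only on $e_k$. To handle this, I would first lift the system to the window state, that is, the stack of the last $K$ ego states, and define $L$ as the Lipschitz constant of the lifted closed-loop map in a max-norm over the window. The scalar recursion then holds for the window error, and it dominates $e_k$. The non-expansive claim $L\le1$ for a stable chassis below the handling limit is taken as an assumption, motivated by the contractivity of the linearized bicycle model of Eq.~\eqref{eq:bicycle-ode} in its stable regime, rather than derived.
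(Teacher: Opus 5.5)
Your proposal is correct and follows the same argument the paper builds into the corollary itself, which has no separate proof: a fresh per-step error $\epsilon_{\mathrm{ode}}$ plus Lipschitz propagation of the previous error, summed as a geometric series, with $L\le 1$ collapsing the sum to $H\epsilon_{\mathrm{ode}}$. Your window-lifting step makes explicit what the paper folds into the assumed closed-loop Lipschitz constant; in the max-norm the lifted map's constant is automatically at least $1$ (the shifted entries are copied), so the non-expansive case is exactly $L=1$ and still gives the paper's $\mathcal{O}(H\epsilon_{\mathrm{ode}})$ bound.
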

 
\begin{corollary}[Zero-shot cross-embodiment transfer]
\label{cor:zeroshot}
The ego-dynamics prior $\vdctx_t$ is parameterized through the identifiable chassis parameters $\hat\vparams_t$ rather than memorized appearance. Under the identifiability assumption, an unseen chassis $c'$ yields a re-identified $\hat\vparams_t$ that relocates the prior to the new dynamics regime, enabling the WM and policy to adapt without retraining.
\end{corollary}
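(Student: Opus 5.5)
The argument chains identifiability (assumption~(iv)), the sufficiency result of Proposition~\ref{prop:suff}, and the structural fact that the WM depends on the chassis only through the AdaLN channel driven by $\vdctx_t$.

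\emph{Step 1: identifiability on the unseen chassis.} Fix an unseen chassis $c'$ whose parameters $\vparams'$ lie in the tanh-bounded log-space box around $\vparams_{\mathrm{nom}}$ that the parameter head can represent. Under assumption~(iv), a persistently exciting window $\mathcal{W}_t$ generated by $c'$ determines $c'$ up to dynamics-equivalent parameter combinations. The same inverse identification map used in Proposition~\ref{prop:suff} applies to $c'$, provided that map was learned on a training distribution whose support covers $c'$, or extends continuously to it. Given vanishing $\mathcal{L}_{\mathrm{param}}$, $\hat\vparams_t$ therefore converges to an element of the equivalence class of $\vparams'$.

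\emph{Step 2: sufficiency for the unseen chassis.} Feed the re-identified parameters into the bicycle ODE of Eq.~\eqref{eq:bicycle-ode}, the tire model of Eq.~\eqref{eq:neural-tire}, and the residual predictor $\dynpred$. Because $g_t=G(\vstate_t,\vact_t;c)$ depends on $c$ only through these identifiable quantities, Proposition~\ref{prop:suff} gives that $\vdctx_t$ is sufficient for $g_t$ under $c'$. Proposition~\ref{prop:bottleneck} then applies verbatim, so the prior--posterior gap on $c'$ contains no ego-motion term. Corollary~\ref{cor:drift} likewise bounds the imagination drift, since the online update of Eq.~\eqref{eq:online-ctx} re-identifies $c'$ from imagined windows.

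\emph{Step 3: no retraining is needed.} The Transformer weights $\phi$ and the actor enter only through the maps $\vdctx\mapsto(\boldsymbol{\gamma},\mathbf{b},\mathbf{g})$ of Eq.~\eqref{eq:adaln}. By assumption~(v), the object-induced flow $\Delta_t$ is chassis-independent given $(\vobs_t,\vact_t)$. The conditional law $p(\vobs_{t+1}\mid\vobs_t,\vact_t,\vdctx_t)$ therefore coincides with the one learned on training chassis whenever $\vdctx_t$ lies in the range seen in training. The learned conditional model, and hence the policy optimized in imagination, then remain valid for $c'$ without any update to $\phi$ or $\theta$.

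\emph{Main obstacle.} The hard part is the gap between identifiability and out-of-support generalization: universal approximation only guarantees the encoder and the AdaLN maps on the training distribution of chassis. I would handle this by stating the corollary for $c'$ in the closure of the training support, or in its convex hull in log-parameter space. I would then invoke Lipschitz continuity (assumption~(i)) of $G$, and of the learned maps in $\hat\vparams$, so that the errors in $\hat\vparams_t$ and in the prior degrade continuously with distance to the support. Exact zero-shot optimality is thus claimed only within the support, and graceful degradation is claimed for extrapolation beyond it.
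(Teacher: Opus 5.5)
Your chain is the one the paper relies on. The corollary has no separate proof; it rests on the closing summary of Section~\ref{sec:theory} and the deployment paragraph of Section~\ref{subsec:vd-imag}. Identifiability under assumption~(iv) plus Proposition~\ref{prop:suff} make $\vdctx_t$ sufficient for $g_t$ on the new chassis, and Proposition~\ref{prop:bottleneck} then applies. Because the frozen weights $\phi,\theta$ receive chassis-specific conditioning only through the AdaLN channel, a shifted $\vdctx_t$ relocates the prior without retraining.

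The first problem is that Step~1 leans on the wrong loss. ``Vanishing $\mathcal{L}_{\mathrm{param}}$'' is incompatible with the equivalence-class caveat you yourself invoke. If the training chassis distribution contains dynamics-equivalent but distinct parameter vectors, their windows have the same law, and $\mathcal{L}_{\mathrm{param}}$ has a strictly positive floor. Proposition~\ref{prop:suff} instead rests on $\mathcal{L}_{\mathrm{aux}}\to 0$. That makes $\vdctx_t$ sufficient wherever $\hat\vparams_t$ lands, because $\hat\vparams_t$ is itself a readout of $\vdctx_t$ and the residual $\eta_\xi(\vdctx_t,\cdot)$ of Eq.~\eqref{eq:dynpred} absorbs parameter bias. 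The zero-shot results show exactly this: large parameter errors in Table~\ref{tab:param_summary}, small ego-state error in Table~\ref{tab:lowdim_zeroshot}. Step~2 should therefore obtain sufficiency from $\mathcal{L}_{\mathrm{aux}}$, not from $\hat\vparams_t$ landing in the right equivalence class.

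The second problem is scope: what you end up proving is weaker than the corollary. The paper gets the unrestricted claim by reading the identifiability assumption as directly delivering a valid re-identification on $c'$, fed into a physics ODE whose parameter interface is not tied to the training range. That reading is what lets it apply the corollary to Fusorosa and Microlino. Both lie outside the training wheelbase range (Microlino is also below the training mass range), and hence outside the region your restricted statement covers. Your objection is legitimate: assumptions~(i)--(v) plus universal approximation do not make the learned encoder realize the inverse map off its training support, and this is a real weakness of the paper's argument.

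Notice, though, that inside the support your argument never needs the physical parameterization. Once sufficiency comes from $\mathcal{L}_{\mathrm{aux}}$, any context sufficient for $g_t$ passes Steps~2--3, including the generic one of the w/o~Neural-ODE variant. So the contrast with ``memorized appearance'' that the corollary asserts lives entirely in the extrapolation regime you set aside. Your fallback also needs Lipschitz continuity of the learned encoder and AdaLN maps, which is not among~(i)--(v); assumption~(i) concerns only the vehicle dynamics. Even granted, it only bounds the error by a term growing linearly in the distance to the support. That controls degradation rather than establishing adaptation.
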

 
In summary, the bottleneck is one of representation rather than capacity: ego-motion is determined by the executed action and the chassis, yet the observation-free prior cannot anticipate it, which inflates the prior--posterior KL (Proposition~\ref{prop:bottleneck}). The ego-dynamics conditioning differs from a generic learned context in two respects: it constitutes a physically identifiable sufficient statistic, and it is propagated by a physics-informed ODE predictor. These two properties jointly underpin prior-only imagination (Corollary~\ref{cor:drift}) and zero-shot transfer (Corollary~\ref{cor:zeroshot}).

\section{Experiments}
\label{sec:experiments}
 
\subsection{Experimental Setup}
 
The simulation environment is based on CarDreamer~\cite{CarDreamer2025}, a platform for WM-based autonomous driving built atop the CARLA simulator~\cite{carla2017}. To highlight diverse driving skill learning, we improve CarDreamer by configuring random tasks in Town03 (Urban) and Town04 (Highway) as shown in Fig.~\ref{fig:cardreamer}. Each scenario features background traffic flow governed by the IDM and MOBIL models. The simulation runs at a fixed frequency of 10\,Hz. The agent uses a continuous action configuration $\vact=[a_{\mathrm{acc}},\,a_{\mathrm{steer}}]^\top$ with $a_{\mathrm{acc}}, a_{\mathrm{steer}}\in[-1,1]$. An episode terminates upon collision, timeout (500 steps), out-of-lane violation, or destination arrival. 
 
\begin{figure}[!t]
\centering
\includegraphics[width=0.75\columnwidth]{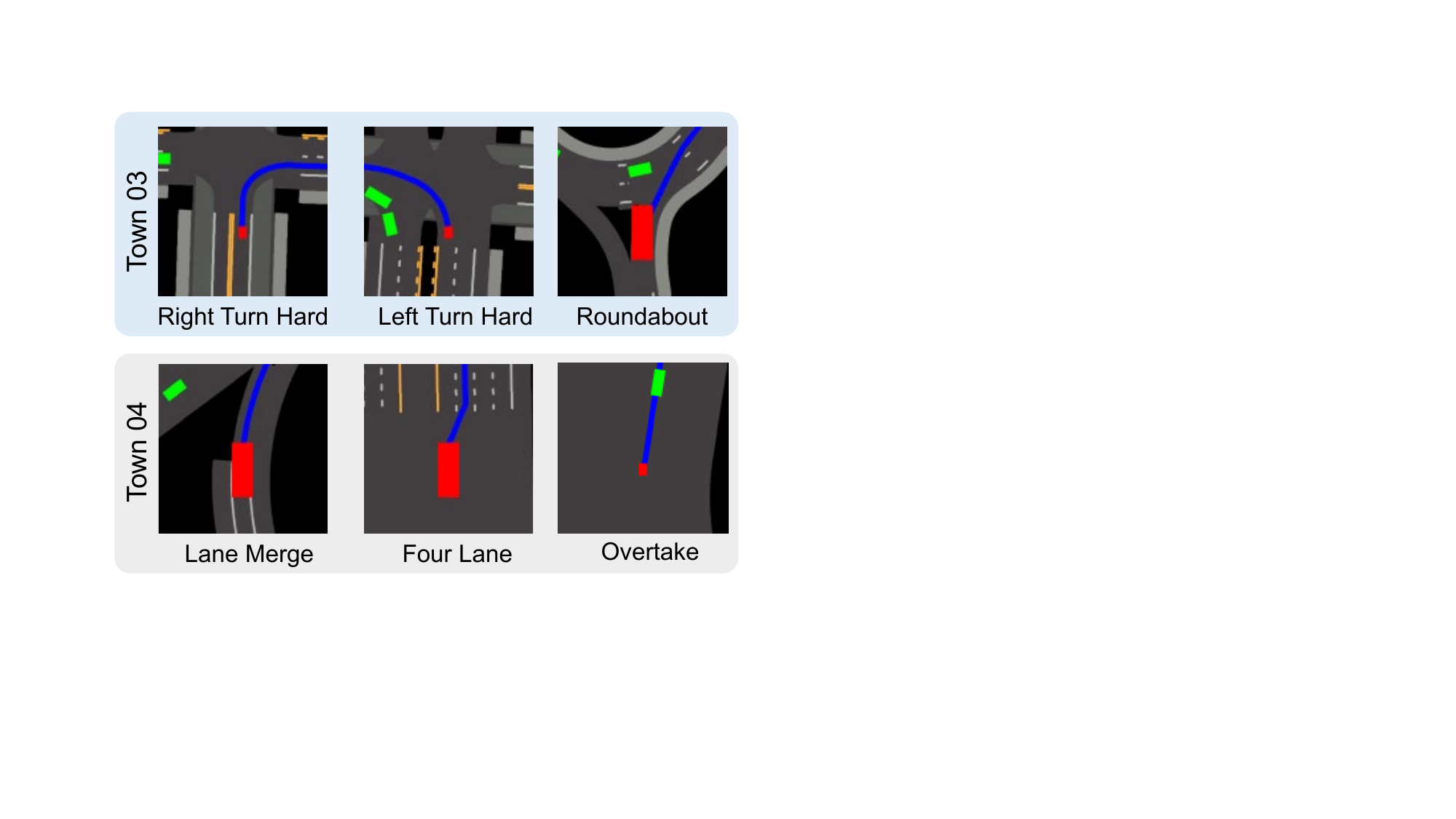}
\caption{Task-diversified CarDreamer: urban scenario in Town03 and highway scenario in Town04. The BEV images demonstrate the WM inputs: four semantic layers are rendered and channel-stacked into a BEV image.}
\label{fig:cardreamer}
\end{figure}

To evaluate generalization across diverse chassis dynamics and test whether the agent captures underlying physical principles, domain randomization in vehicle dynamics during training is applied. As shown in Table~\ref{tab:vehicle_chassis_categories}, the training set comprises 17 vehicle variants spanning four chassis categories (sedan, SUV, van, truck) with distinct control characteristics, while two additional vehicles are reserved for zero-shot evaluation (Fig.~\ref{fig:domain_rand}). This randomization instantiates the chassis-conditioned multimodality of Proposition~\ref{prop:amp}: under an identical policy, dynamically distinct chassis drive the same scene to different temporal evolutions, which the world model must disentangle. To yield a wide distribution of chassis-dynamics features, nine PhysX tire-model parameters are randomized per vehicle instance: multiplicative scale factors drawn from $[0.7,\,1.5]$ are applied to mass, drag coefficient, tire friction, damping rate, maximum steering angle, and lateral and longitudinal stiffness; the center-of-mass (CoM) $x$-offsets and $z$-offsets receive additive perturbations drawn uniformly from $[-0.15,\,0.15]$\,m and $[-0.05,\,0.05]$\,m, respectively. For each vehicle, we extract the ground-truth parameter vector $\vparams^\star$ from the PhysX tire model.

\begin{table}[!t]
\caption{Vehicle Chassis Categories}
\label{tab:vehicle_chassis_categories}
\centering
\footnotesize
\begin{tabularx}{\columnwidth}{@{}
>{\centering\arraybackslash}p{0.9cm}
>{\centering\arraybackslash}p{1.2cm}
>{\centering\arraybackslash}p{5.9cm}
@{}}
\toprule
 & Category & Variant ID \\
\midrule
\multirow{4}{*}{\shortstack{In\\dist.}}
& Sedan  & Coupe, MKZ, Mustang, TT, Model3 \\
& SUV    & Patrol\_21, Patrol, Wrangler, e-tron \\
& Van    & Ambulance, Sprinter, T2, T2\_21 \\
& Truck  & Euro\_HGV, Firetruck, Cybertruck, CarlaCola \\
\midrule
\multirow{2}{*}{\shortstack{Zero\\shot}}
& Bus & Fusorosa \\
& Micro  & Microlino \\
\bottomrule
\end{tabularx}
\end{table}
 
The primary observation is a BEV image of size $128\times128\times3$, containing four semantic layers: road map, planned waypoints, ego vehicle, and surrounding traffic vehicles as shown in Fig.~\ref{fig:cardreamer}. The composite reward at timestep $t$ is:
\begin{align}
r_t &= r_{\mathrm{spd}} + r_{\mathrm{lat}} + r_{\mathrm{hdg}} + r_{\mathrm{wpt}} + r_{\mathrm{stb}} + r_{\mathrm{col}} + r_{\mathrm{dst}}, \nonumber \\
r_{\mathrm{spd}} &= w_v\varphi - w_v\max\left(0,\tfrac{v_\parallel - v_{\mathrm{des}}}{v_{\mathrm{des}}}\right) - \tfrac{1}{2}\min\left(v_\perp,\,1\right), \nonumber \\
r_{\mathrm{lat}} &= w_d\,\varphi\,\exp\left(-d_\perp^2/\sigma^2\right), \nonumber \\
r_{\mathrm{hdg}} &= w_\theta\bigl[\varphi\max\left(\cos\Delta\theta,\,0\right) + \min\left(\cos\Delta\theta,\,0\right)\bigr], \nonumber \\
r_{\mathrm{wpt}} &= w_{\mathrm{wpt}}\,\mathbf{1}[\Delta n_{\mathrm{wpt}}>0], \label{eq:reward} \\
r_{\mathrm{stb}} &= -w_s\bigl[\max\left(|a_y|-1.5,\,0\right) + 0.05\max\left(|\dot\omega|-8,\,0\right)\bigr], \nonumber \\
r_{\mathrm{col}} &= -w_c\max\left(v,\,1\right)\,\mathbf{1}[\text{collision}], \nonumber \\
r_{\mathrm{dst}} &= w_{\mathrm{dst}}\,\mathbf{1}[\text{goal reached}]. \nonumber
\end{align}
where $\varphi = \mathrm{clip}(v_\parallel/v_{\mathrm{des}},\,0,\,1)$ is the speed-gating factor, with $v_\parallel$ the path-tangential ego speed, $v_\perp$ the lateral ego speed, and $v_{\mathrm{des}}$ the target speed; $d_\perp$ is the cross-track deviation; $\Delta\theta$ the heading error; $\Delta n_{\mathrm{wpt}}$ the number of waypoints completed in the step; $a_y$ the lateral acceleration; and $\dot\omega$ the yaw acceleration.
Weights: $w_v{=}2.0$, $w_d{=}1.0$, $w_\theta{=}0.5$, $w_{\mathrm{wpt}}{=}2.0$, $w_s{=}0.02$, $w_c{=}30$, $w_{\mathrm{dst}}{=}20$; $\sigma{=}1.5$\,m.
The speed gate $\varphi$ ensures a stationary vehicle receives zero dense reward, preventing exploitation without forward motion; the dead zone in $r_{\mathrm{stb}}$ ($|a_y|<1.5$\,m/s$^2$, $|\dot\omega|<8$\,rad/s$^2$) permits normal cornering without penalty.

\begin{figure}[!t]
\centering
\includegraphics[width=\columnwidth]{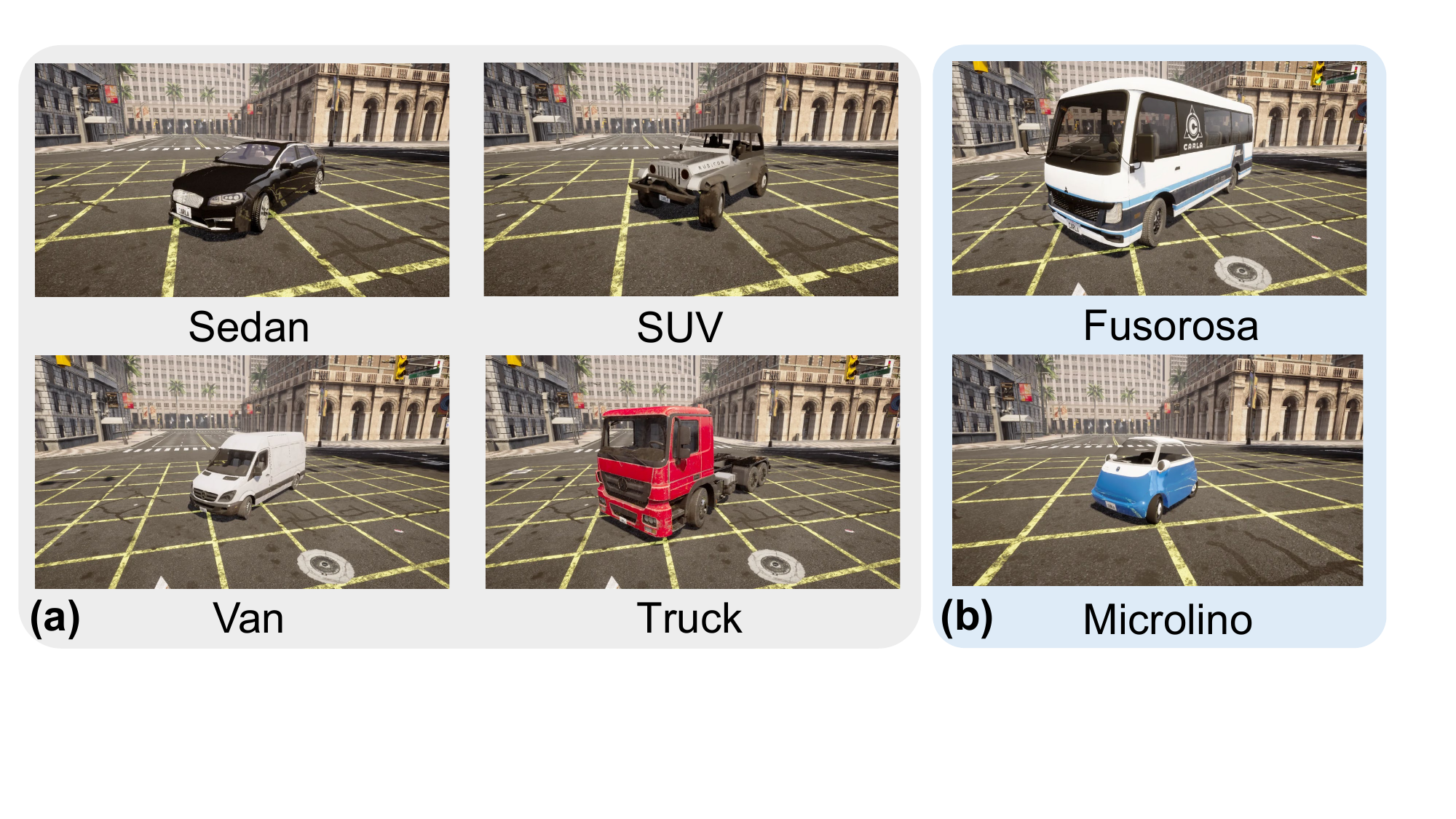}
\caption{Chassis randomization. (a) Training vehicles span four categories, with per-instance chassis parameter randomization. (b) Two unseen vehicle variants (Fusorosa, Microlino) are used for zero-shot evaluation.}
\label{fig:domain_rand}
\end{figure}
 
\subsection{Training Details}
 
We implement DynaDreamer on STORM~\cite{STORM2023}, and compare DynaDreamer against baselines and ablation variants.
 
\subsubsection{Baselines}
(1)~\textit{SAC} (Soft Actor-Critic)~\cite{softActorCritic2018}: a model-free RL method with a convolutional neural network (CNN) encoder.
(2)~\textit{DreamerV3}~\cite{Dreamer2025}: a model-based method using a recurrent state-space model (RSSM) as its WM.
(3)~\textit{STORM}~\cite{STORM2023}: a Transformer-based WM, and an actor-critic agent trained from imagined rollouts.

\subsubsection{Ablation and injection variants}
Each variant modifies one design aspect of Section~\ref{sec:method} to isolate its contribution.
(1)~\textit{VD-STORM}: an injection variant that equips STORM with the same VD-context and ego-dynamics modules as DynaDreamer, but feeds $\vdctx_t$ only to the actor-critic while leaving the WM unconditioned. This variant is a controlled re-implementation of policy-level physical conditioning~\cite{OneModeltoDriftThemAll} on the shared backbone, and isolates the effect of the injection point on the prior--posterior bottleneck.
(2)~\textit{w/o Neural-ODE}: replaces the physics-informed dynamics predictor (bicycle ODE, neural tire model, and neural residual) with a pure residual MLP, and removes the parameter estimation head together with the physics-informed features, thereby eliminating the physical identifiability mechanism. The resulting purely learned context is a controlled counterpart of generic context-based adaptation~\cite{MetaRL} on the shared backbone.
(3)~\textit{w/o Modulation}: replaces AdaLN conditioning with concatenation of $\vdctx_t$ at the Transformer input stem and substitutes absolute positional encoding for RoPE, weakening the distributional conditioning pathway.
(4)~\textit{w/o Multi-step}: reduces the auxiliary prediction horizon from $K_{\mathrm{pred}}{=}5$ (autoregressive multi-step) to $K_{\mathrm{pred}}{=}1$ (single-step), degrading the predictor fidelity required for imagination-time synchronization.
 
\subsubsection{Evaluation protocol}
Each method is evaluated over 300 episodes partitioned into three conditions. \textit{S1} (in-distribution, 100 episodes): vehicles are randomly drawn from the 17 vehicle variant training pool. \textit{S2} (zero-shot, 100 episodes): the unseen Fusorosa, a bus-class vehicle whose 5.63\,m wheelbase exceeds that of every training variant. \textit{S3} (zero-shot, 100 episodes): the unseen Microlino, a micro-class vehicle whose wheelbase (1.52\,m) and mass are both below those of any training variant. The two vehicles bracket the training wheelbase range (2.31--5.0\,m) from both sides; together with the 17 training variants, the evaluation spans microcar, sedan, SUV, van, truck, and bus classes. S2 and S3 therefore probe not only whether the ego-dynamics prior generalizes to handling regimes outside the training distribution, but whether a single WM can serve fleet-level deployment without per-platform retraining. We report 5 categories of metrics:
\begin{itemize}
\item Task completion: success rate, collision rate, out-of-lane rate, timeout rate, cumulative reward.
\item Driving quality: heading error, lateral deviation, average jerk, average speed.
\item BEV reconstruction: peak signal-to-noise ratio (PSNR), mean absolute error (MAE).
\item Low-dimensional state prediction: MAE at different horizons ($H{=}1,3,5$) and per-component breakdown. 
\item Chassis parameter estimation: relative error mean $\bigl(\frac{1}{7}\sum_{k=1}^{7}|[\hat{\vparams}]_k - [\vparams^\star]_k|/|[\vparams^\star]_k|\bigr)$, log-space mean squared error (MSE) $\bigl(\frac{1}{7}\sum_{k=1}^{7}(\log[\hat{\vparams}]_k - \log[\vparams^\star]_k)^2\bigr)$, and consistency (standard deviation of relative errors across episodes).
\end{itemize}
\subsubsection{Implementation details}
Table~\ref{tab:training_hyper} summarizes the training hyperparameters. All STORM-based methods share the same backbone architecture. On Town03, all methods are trained for $10^5$ real-interaction steps; on Town04, $5\times10^4$ steps suffice as the simpler highway geometry leads to faster convergence. All methods use a replay buffer of $10^5$ transitions and a warm-up of 1024 random steps. Three random seeds are used for reproduction.

Fig.~\ref{fig:training_curves} presents the training curves on Town03 (Fig.~\ref{fig:training_curves}(a)-(b)) and Town04 (Fig.~\ref{fig:training_curves}(c)-(d)). SAC fails to converge on both Town03 and Town04 and is excluded from subsequent evaluation. On Town03, DynaDreamer converges to the highest episode score and control score across all remaining baselines and ablation variants, demonstrating stable and efficient learning throughout training. On Town04, DynaDreamer again attains the highest episode score and control score across all compared methods.
 
\begin{figure}[!t]
\centering
\includegraphics[width=0.9\columnwidth]{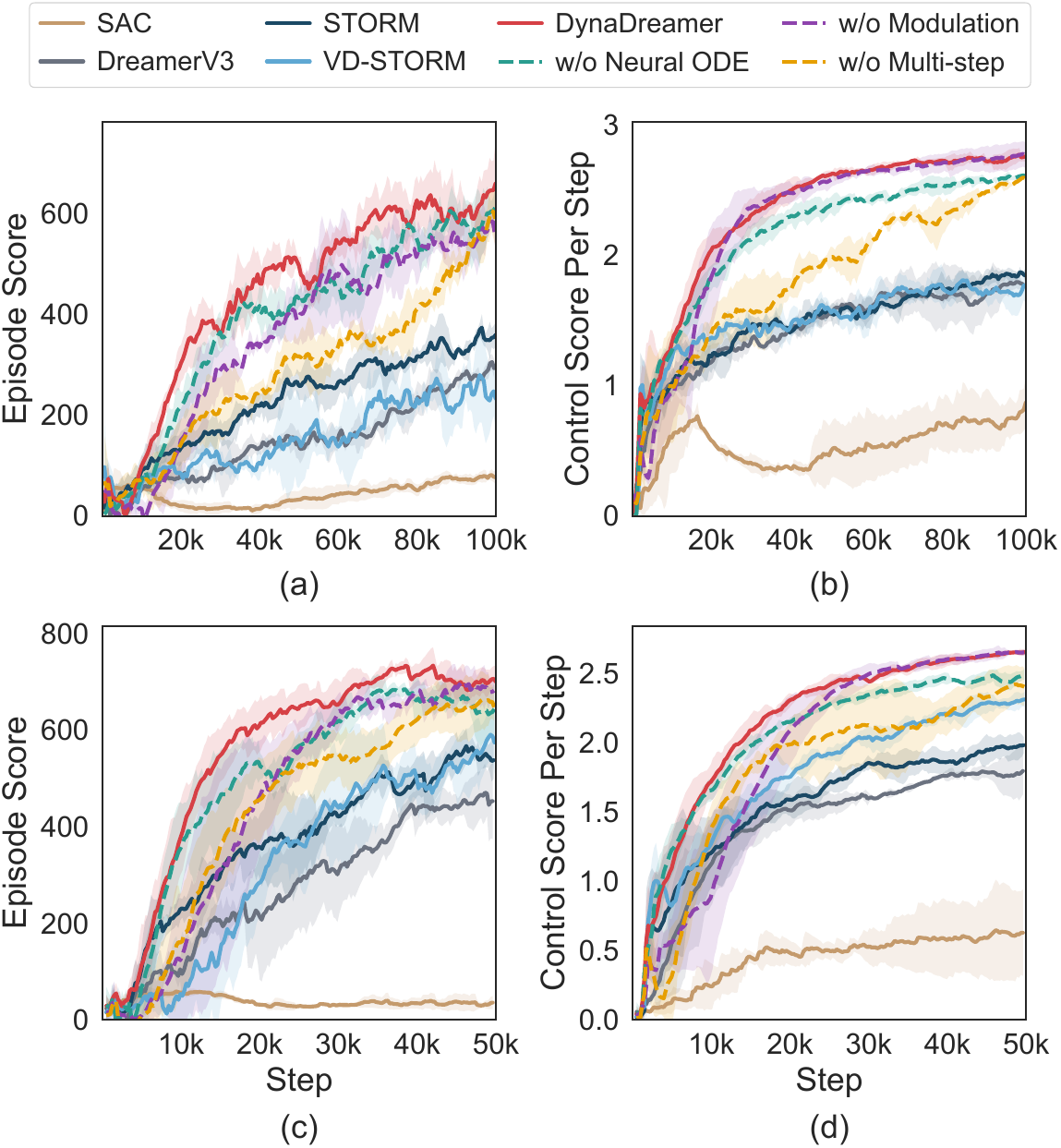}
\caption{Training curves on Town03 (Urban, a-b) and Town04 (Highway, c-d). Solid lines: baselines and the injection variant VD-STORM; dashed lines: ablation variants. Shaded regions denote standard deviation across three seeds. }
\label{fig:training_curves}
\end{figure}
 
\begin{table}[h!]
\caption{Training Hyperparameters}
\label{tab:training_hyper}
\centering
\footnotesize
\begin{tabularx}{\columnwidth}{@{}
>{\hsize=0.90\hsize\centering\arraybackslash}X
>{\hsize=1.50\hsize\centering\arraybackslash}X
>{\hsize=0.60\hsize\centering\arraybackslash}X
@{}}
\toprule
Hyperparameter & Description & Value \\
\midrule
$\mathrm{lr}_{\mathrm{wm}}$ & WM learning rate & $1{\times}10^{-4}$ \\
$\mathrm{lr}_{\mathrm{agent}}$ & Agent learning rate & $3{\times}10^{-5}$ \\
$B \times T$ & Batch size (seq.\ $\times$ length) & $16 \times 64$ \\
$|\mathcal{D}|$ & Replay buffer capacity & $10^{5}$ \\
$\gamma$ & Discount factor & 0.99 \\
$\lambda$ & Trace-decay coefficient & 0.95 \\
$H$ & Imagination horizon & 16 \\
$K_{\mathrm{ctx}}$ & Imag.\ warm-up context & 8 \\
$\beta_{\mathrm{ent}}$ & Entropy coefficient & $3{\times}10^{-4}$ \\
$K_{\mathrm{pred}}$ & Aux.\ prediction steps & 5 \\
$\beta_{\mathrm{dyn}}$ / $\beta_{\mathrm{rep}}$ & KL loss weights & 0.5 / 0.1 \\
$\alpha_{\mathrm{aux}}$ / $\alpha_{\mathrm{param}}$ & Aux.\ / param.\ loss weights & 1.0 / 0.1 \\
\bottomrule
\end{tabularx}
\end{table}
 
\begin{figure}[!t]
\centering
\includegraphics[width=0.9\columnwidth]{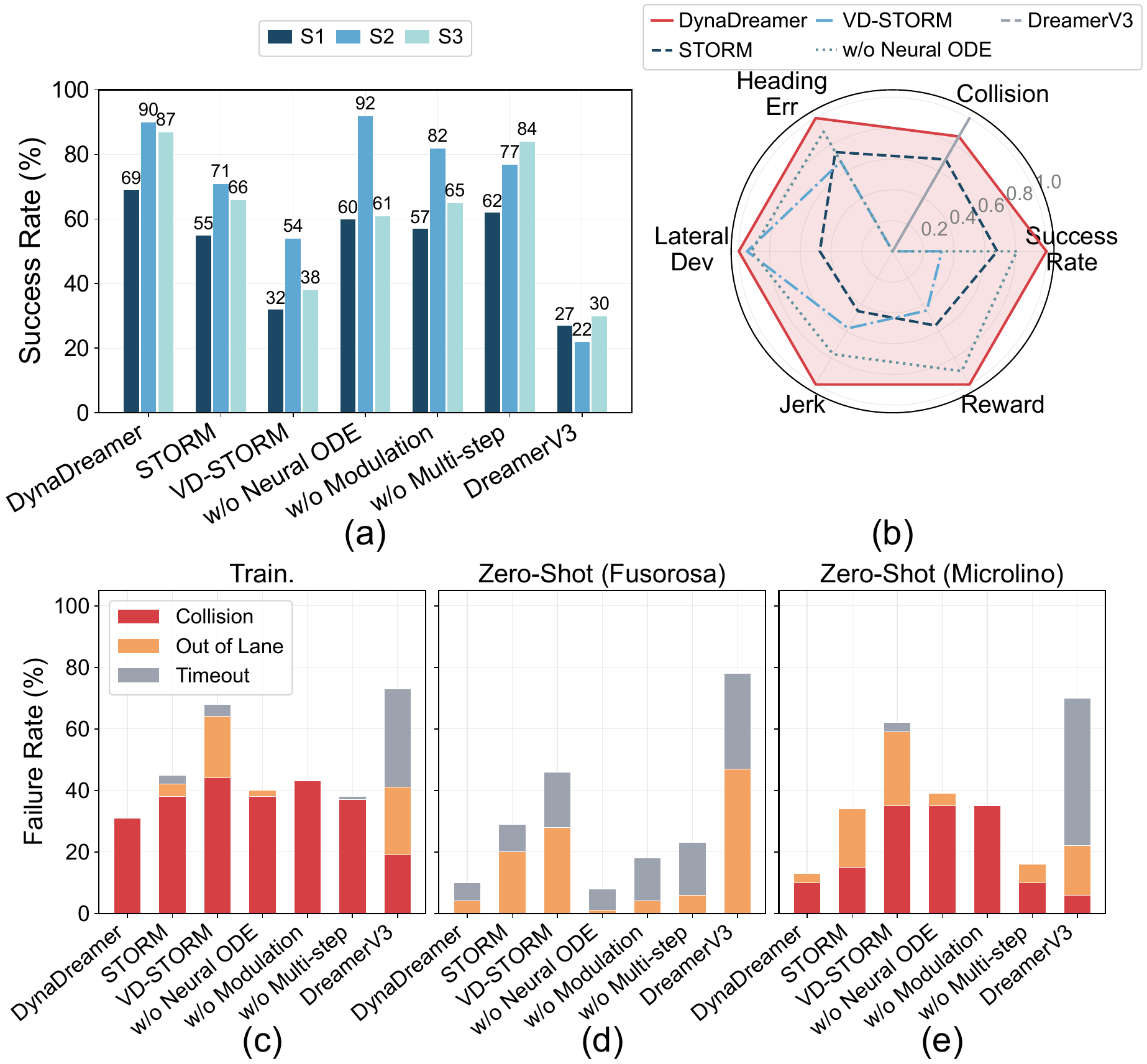}
\caption{(a) Success rates across three evaluation conditions. (b) Radar chart normalizing six performance dimensions to $[0,1]$. (c)-(e) Failure mode breakdown for S1, S2, and S3, respectively.}
\label{fig:success_failure}
\end{figure}
 
\begin{table*}[!t]
\caption{Task Performance Comparison on Town03 (Urban) and Town04 (Highway)}
\label{tab:main_results}\label{tab:town04_results}
\centering
\setlength{\tabcolsep}{3.2pt}
\renewcommand{\arraystretch}{1.05}
\begin{tabular}{p{1.4cm} l ccccc|ccccc|ccccc}
\toprule
& & \multicolumn{5}{c}{Training-Identical (S1)} & \multicolumn{5}{c}{Zero-Shot Fusorosa (S2)} & \multicolumn{5}{c}{Zero-Shot Microlino (S3)} \\
\cmidrule(lr){3-7} \cmidrule(lr){8-12} \cmidrule(lr){13-17}
Scenario & Method & Succ.$\uparrow$ & Coll.$\downarrow$ & OOL$\downarrow$ & T.O.$\downarrow$ & Rew.$\uparrow$
       & Succ.$\uparrow$ & Coll.$\downarrow$ & OOL$\downarrow$ & T.O.$\downarrow$ & Rew.$\uparrow$
       & Succ.$\uparrow$ & Coll.$\downarrow$ & OOL$\downarrow$ & T.O.$\downarrow$ & Rew.$\uparrow$ \\
\midrule
\multirow{7}{1.4cm}{\raggedright\textit{Town03 (Urban)}}
& DreamerV3      & 0.27 & \textbf{0.19} & 0.22 & 0.32 & 277 & 0.22 & 0.00 & 0.47 & 0.31 & 338 & 0.30 & \textbf{0.06} & 0.16 & 0.48 & 387 \\
& STORM          & 0.55 & 0.38 & 0.04 & 0.03 & 337 & 0.71 & 0.00 & 0.20 & 0.09 & 540 & 0.66 & 0.15 & 0.19 & 0.00 & 623 \\
& VD-STORM       & 0.32 & 0.44 & 0.20 & 0.04 & 219 & 0.54 & 0.00 & 0.28 & 0.18 & 537 & 0.38 & 0.35 & 0.24 & 0.03 & 537 \\
& w/o Neural-ODE & 0.60 & 0.38 & 0.02 & 0.00 & 586 & \textbf{0.92} & 0.00 & \textbf{0.01} & 0.07 & 794 & 0.61 & 0.35 & 0.04 & 0.00 & 637 \\
& w/o Modulation & 0.57 & 0.43 & 0.00 & 0.00 & 435 & 0.82 & 0.00 & 0.04 & 0.14 & 717 & 0.65 & 0.35 & \textbf{0.00} & 0.00 & 564 \\
& w/o Multi-step & 0.62 & 0.37 & 0.00 & 0.01 & 584 & 0.77 & 0.00 & 0.06 & 0.17 & 759 & 0.84 & 0.10 & 0.06 & 0.00 & 678 \\
& \textbf{DynaDreamer} & \textbf{0.69} & 0.31 & \textbf{0.00} & \textbf{0.00} & \textbf{604} & 0.90 & \textbf{0.00} & 0.04 & \textbf{0.06} & \textbf{816} & \textbf{0.87} & 0.10 & 0.03 & \textbf{0.00} & \textbf{694} \\
\midrule
\multirow{7}{1.4cm}{\raggedright\textit{Town04 (Highway)}}
& DreamerV3      & 0.53 & 0.31 & 0.15 & 0.01 & 260 & 0.77 & 0.00 & 0.15 & 0.08 & 313 & 0.59 & 0.24 & 0.07 & 0.10 & 516 \\
& STORM          & 0.56 & 0.22 & 0.01 & 0.21 & 516 & 0.55 & 0.00 & 0.37 & 0.08 & 524 & 0.57 & 0.08 & \textbf{0.00} & 0.35 & 689 \\
& VD-STORM       & 0.78 & 0.16 & \textbf{0.00} & 0.06 & 554 & 0.87 & 0.00 & 0.07 & 0.06 & 466 & 0.87 & 0.05 & 0.00 & 0.08 & 782 \\
& w/o Neural-ODE & 0.92 & 0.07 & 0.01 & \textbf{0.00} & \textbf{776} & \textbf{0.97} & 0.00 & 0.00 & \textbf{0.03} & \textbf{737} & 0.86 & \textbf{0.02} & 0.12 & 0.00 & 812 \\
& w/o Modulation & 0.91 & 0.09 & 0.00 & 0.00 & 748 & 0.97 & 0.00 & 0.00 & 0.03 & 730 & \textbf{0.95} & 0.05 & 0.00 & 0.00 & \textbf{823} \\
& w/o Multi-step & 0.91 & 0.05 & 0.04 & 0.00 & 647 & 0.91 & 0.00 & 0.03 & 0.06 & 625 & 0.84 & 0.16 & 0.00 & 0.00 & 699 \\
& \textbf{DynaDreamer} & \textbf{0.93} & \textbf{0.04} & 0.02 & 0.01 & 753 & 0.95 & \textbf{0.00} & \textbf{0.00} & 0.05 & 718 & 0.82 & 0.15 & 0.03 & \textbf{0.00} & 739 \\
\bottomrule
\end{tabular}
\begin{flushleft}
\footnotesize Succ.=Success Rate, Coll.=Collision, OOL=Out-of-Lane, T.O.=Timeout, Rew.=Cumulative Reward. Bold = best per column within each section. SAC excluded (non-convergence).
\end{flushleft}
\end{table*}
 
\subsection{Task Performance and Driving Quality}
 
Table~\ref{tab:main_results} summarizes results across both scenarios and three evaluation conditions. On Town03, DynaDreamer achieves the highest average success rate across S1--S3 (82.0\%), a 28\% relative improvement over STORM, while maintaining a low collision rate. VD-STORM, which receives the same ego-dynamics context but only at the policy level, degrades below STORM (41.3\% vs.\ 64.0\%): the actor receives chassis information, yet its policy is optimized within chassis-agnostic imagined rollouts, creating a train--test discrepancy. The ego-dynamics information must therefore enter the WM's predictive distributions, the bottleneck identified by Proposition~\ref{prop:bottleneck}. On Town04, DynaDreamer improves over the strongest baseline (DreamerV3) by 43\% on average; relative to the shared Transformer WM backbone (STORM), the advantage reaches 73\% on the unseen bus-class chassis (S2). Simplified variants match or exceed the full model in several highway cells (w/o~Neural-ODE on S1--S2, w/o~Modulation on S3), because the low-curvature geometry weakly excites lateral dynamics and reduces the ego-motion burden. This is consistent with Proposition~\ref{prop:decomp}: the burden concentrates in the lateral and yaw components (cf.\ Fig.~\ref{fig:lowdim_comp}), and the full architecture yields its largest gains in the urban scenario, where these components are strongly excited. The ego-dynamics conditioning further transfers to the unseen Fusorosa and Microlino without retraining, consistent with Corollary~\ref{cor:zeroshot}. The subsequent analysis focuses on Town03, where the urban geometry better differentiates the methods.

A pivotal comparison concerns the physical grounding itself. The purely learned context of w/o~Neural-ODE stays close to the full model within the training support (0.60 vs.\ 0.69 success on Town03~S1; 0.92 vs.\ 0.93 on Town04~S1) and remains competitive on the long-wheelbase Fusorosa (S2), whose sluggish yaw response only mildly excites the lateral channel. It collapses, however, on the sub-tonne Microlino in the urban scenario: 0.61 success with a 0.35 collision rate, against 0.87 and 0.10 for DynaDreamer. The Microlino lies below the training range in both mass and wheelbase, and its high yaw gain maximizes the ego-induced warp per unit steering (Proposition~\ref{prop:amp}); a generic learned context cannot relocate its conditioning to this unseen dynamics regime, whereas the physics-grounded $\vdctx_t$ re-identifies the chassis online and preserves performance (Corollary~\ref{cor:zeroshot}). Physical identifiability thus converts in-distribution competence into extrapolative robustness at the far end of the fleet spectrum.

Figs.~\ref{fig:success_failure} and~\ref{fig:driving_metrics} provide complementary breakdowns of failure modes and driving quality on Town03. DynaDreamer exhibits balanced, low failure rates across all modes, whereas STORM and VD-STORM suffer from high out-of-lane rates (19.5\% and 26.0\%, respectively); ablation variants w/o~Modulation and w/o~Neural-ODE show elevated collision rates (17.5\%), confirming that ego-dynamics augmentation is critical for safety. Among the three components, adaptive modulation has the strongest individual effect on task success rate. For driving quality, DynaDreamer achieves the lowest heading error and jerk, reflecting the smoothest trajectory following, and the radar chart in Fig.~\ref{fig:success_failure}(b) confirms its largest enclosed area, indicating the most well-balanced overall performance.
 
\begin{figure}[!t]
\centering
\includegraphics[width=0.9\columnwidth]{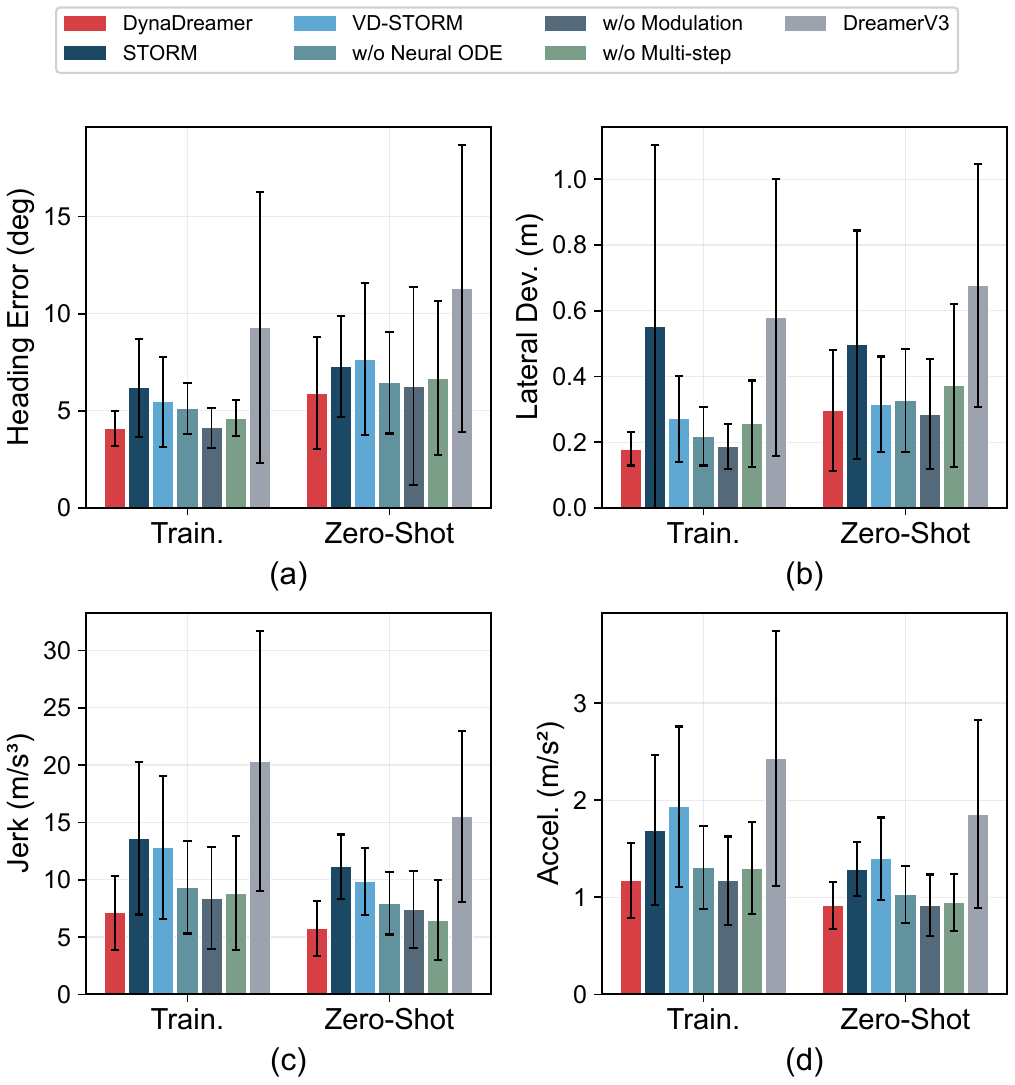}
\caption{Comparison of lateral deviation, heading error, jerk, and speed across methods (Pooled Zero-Shot, S2+S3).}
\label{fig:driving_metrics}
\end{figure}
 
\subsection{World Model Quality}
 
Table~\ref{tab:bev} reports WM quality under pooled zero-shot evaluation (S2+S3), combining prior--posterior alignment (KL divergence) and BEV reconstruction (PSNR/MAE). DynaDreamer attains the lowest KL (10.62, vs.\ 36.80 for DreamerV3 and 14.19 for STORM), the highest PSNR, and the lowest MAE, indicating both well-calibrated latent dynamics and faithful image generation. Ablation variants achieve comparable PSNR but consistently higher KL than DynaDreamer, suggesting that each component contributes to tightening the prior--posterior gap. Evaluated on unseen chassis (S2+S3), where the prior's inability to anticipate ego-motion is most acute, this KL gap is the central empirical signature of Proposition~\ref{prop:bottleneck}. VD-STORM injects the ego-dynamics context only into the actor-critic while leaving the prior unconditioned, and attains a KL of 15.80, no better than STORM (14.19); routing the same context into the world model's prior and posterior instead lowers the KL to 10.62. This confirms that the availability of ego-dynamics information alone is insufficient: the bottleneck of Proposition~\ref{prop:bottleneck} contracts only when this information directly conditions the WM's observation-free prior. The monotonic decrease from STORM (14.19) through w/o~Modulation (13.54, partial conditioning) to DynaDreamer (10.62, full AdaLN conditioning) further corroborates that the KL reduction scales with the degree to which the prior is informed of ego-dynamics.
 
\begin{table}[!t]
\caption{World Model Quality: Pooled Zero-Shot Evaluation (S2+S3).}
\label{tab:bev}
\centering
\resizebox{\columnwidth}{!}{%
\begin{tabular}{l ccc}
\toprule
Method & KL$_{\mathrm{prior\|post}}$ $\downarrow$ & BEV PSNR (dB) $\uparrow$ & BEV MAE $\downarrow$ \\
\midrule
DreamerV3      & 36.80 & 20.88 & 0.0337 \\
STORM          & 14.19 & 22.53 & 0.0347 \\
VD-STORM       & 15.80 & 23.24 & 0.0310 \\
w/o Neural-ODE & 13.67 & 23.24 & 0.0282 \\
w/o Modulation & 13.54 & 22.70 & 0.0345 \\
w/o Multi-step & 11.18 & 23.29 & 0.0298 \\
 
\textbf{DynaDreamer} & \textbf{10.62} & \textbf{23.55} & \textbf{0.0269} \\
\bottomrule
\end{tabular}}
\end{table}
 
Figs.~\ref{fig:rollout_corr}--\ref{fig:lowdim_comp} further characterize imagination quality. DynaDreamer sustains the highest rollout reward correlation as the horizon grows. Its prediction error grows slowly from H1 to H5, whereas w/o~Multi-step exhibits error explosion. This contrast highlights the practical importance of the drift bound in Corollary~\ref{cor:drift}: multi-step supervision keeps $\epsilon_{\mathrm{ode}}$ small, and refreshing $\vdctx_t$ along the rollout prevents ego-motion error from compounding. The per-component breakdown at H5 shows DynaDreamer excelling at lateral dynamics prediction, directly supporting lane-keeping performance. This advantage concentrates on the lateral and yaw components, which Eq.~\eqref{eq:flow-decomp} attributes to the ego-induced warp. The burden isolated in Proposition~\ref{prop:decomp} is therefore precisely the one the prior offloads.
 
\begin{figure}[!t]
\centering
\includegraphics[width=0.9\columnwidth]{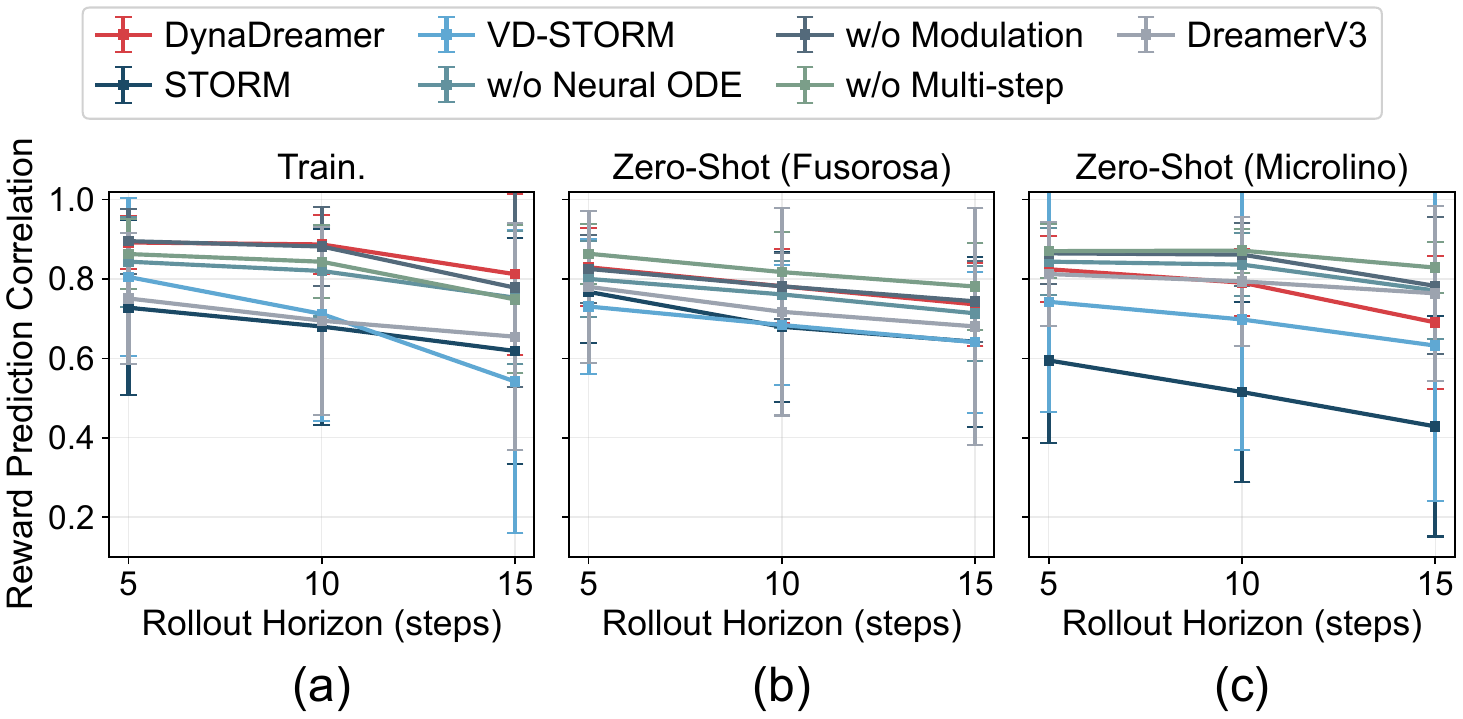}
\caption{Rollout reward correlation at horizons H5, H10, H15 across three evaluation conditions.}
\label{fig:rollout_corr}
\end{figure}
 
\begin{figure}[!t]
\centering
\includegraphics[width=0.9\columnwidth]{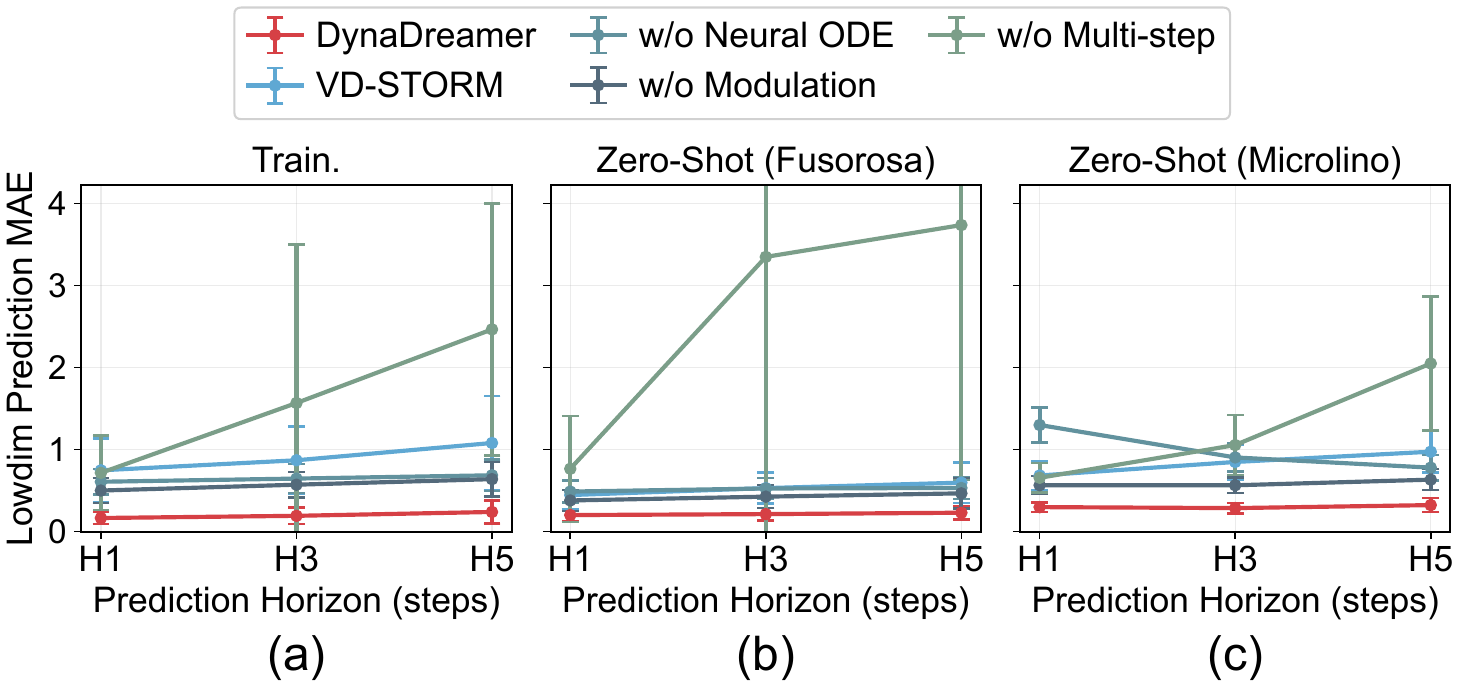}
\caption{Low-dimensional state prediction: overall prediction MAE at horizons H1, H3, H5 under the training-identical and the two zero-shot conditions.}
\label{fig:lowdim_horizon}
\end{figure}
 
\begin{figure}[!t]
\centering
\includegraphics[width=0.9\columnwidth]{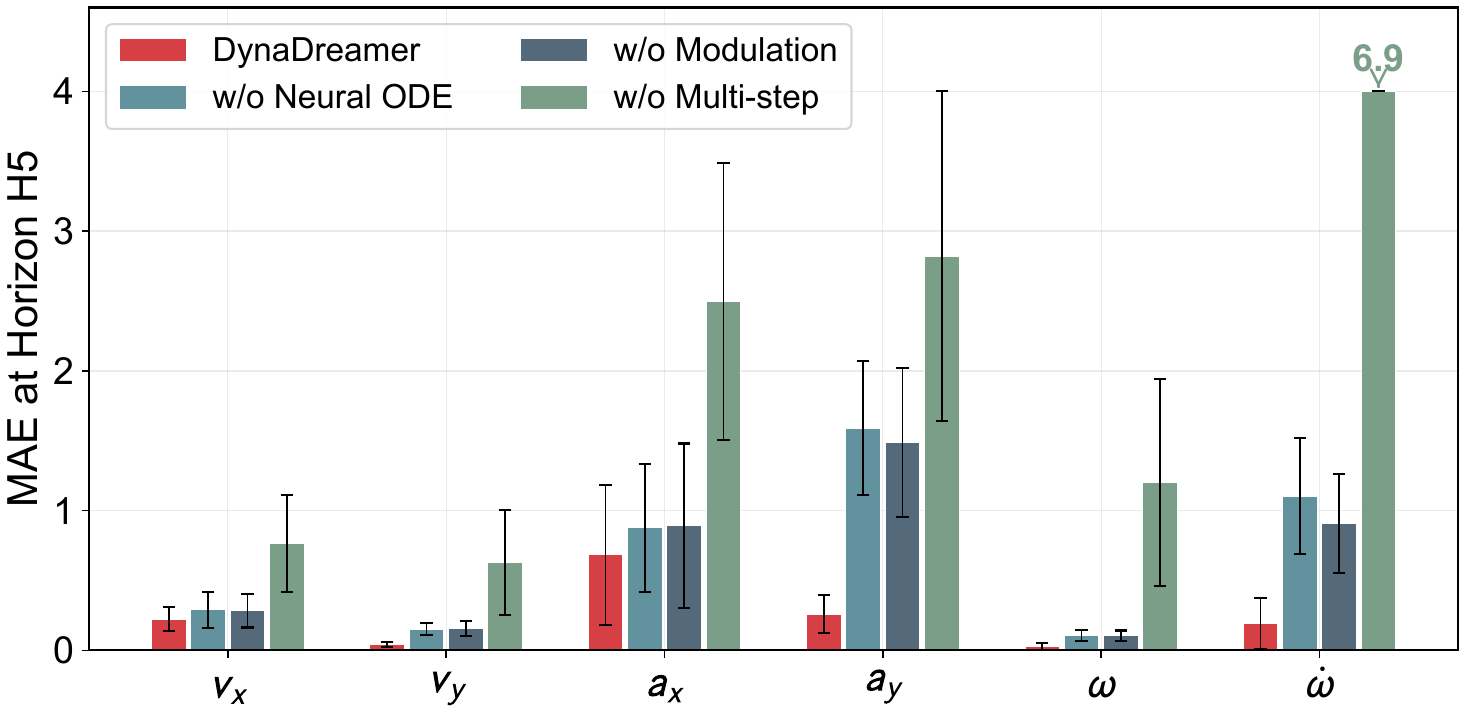}
\caption{Per-component prediction MAE at H5, covering $v_x$, $v_y$, $a_x$, $a_y$, yaw rate, and yaw acceleration. Training-identical condition.}
\label{fig:lowdim_comp}
\end{figure}
 
Proposition~\ref{prop:suff} establishes sufficiency with respect to the ego motion $g_t$, which depends on parameter combinations within the bicycle model. Individual parameters are therefore identifiable only up to dynamically equivalent combinations, an intrinsic ambiguity acknowledged in the proposition. The appropriate test of sufficiency is consequently the ego-state prediction error on unseen chassis. Table~\ref{tab:lowdim_zeroshot} reports the pooled zero-shot prediction MAE (disaggregated per vehicle in Fig.~\ref{fig:lowdim_horizon}): DynaDreamer attains the lowest error at every horizon and, crucially, its error stays nearly flat from H1 to H5 (0.250 to 0.277), the empirical signature of the bounded drift of Corollary~\ref{cor:drift}, whereas w/o~Multi-step exhibits an order-of-magnitude explosion. Point estimates of the parameters themselves are a stricter and partially ill-posed target: under zero-shot evaluation they carry large relative errors for all methods (Table~\ref{tab:param_summary}), exactly as the parameter-coupling ambiguity predicts. DynaDreamer still achieves the lowest error and best consistency, while w/o~Multi-step degrades noticeably, indicating that multi-step auxiliary supervision provides gradient signals that also improve parameter learning; the neural residual of Eq.~\eqref{eq:dynpred} absorbs the remaining parameter bias, as the case study below illustrates. Table~\ref{tab:param_detail} provides a per-parameter breakdown under training-identical conditions, where relative errors range from 0.090 (steering limit) to 0.509 (moment of inertia). The higher error on $I_z$ is expected given its large variance across vehicle types and its indirect observability from driving behavior alone.

\begin{table}[!t]
\caption{Zero-Shot Ego-State Prediction MAE (Pooled S2+S3).}
\label{tab:lowdim_zeroshot}
\centering
\setlength{\tabcolsep}{5pt}
\begin{tabular}{l ccc}
\toprule
Method & H1 $\downarrow$ & H3 $\downarrow$ & H5 $\downarrow$ \\
\midrule
VD-STORM       & 0.565 & 0.690 & 0.786 \\
w/o Neural-ODE & 0.894 & 0.715 & 0.656 \\
w/o Modulation & 0.473 & 0.495 & 0.550 \\
w/o Multi-step & 0.711 & 2.203 & 2.894 \\
\textbf{DynaDreamer} & \textbf{0.250} & \textbf{0.250} & \textbf{0.277} \\
\bottomrule
\end{tabular}
\end{table}
 
\begin{table}[!t]
\caption{Chassis Parameter Estimation: Method Comparison (Pooled Zero-Shot, S2+S3).}
\label{tab:param_summary}
\centering
\setlength{\tabcolsep}{3pt}
\begin{tabular}{l ccc}
\toprule
Method & Rel. Err. $\downarrow$ & Log MSE $\downarrow$ & Consist. $\downarrow$ \\
\midrule
VD-STORM       & 1.527 & 0.910 & 1.019 \\
w/o Multi-step & 1.838 & 1.080 & 1.043 \\
\textbf{DynaDreamer} & \textbf{1.481} & \textbf{0.868} & \textbf{0.844} \\
\bottomrule
\end{tabular}
\end{table}
 
\begin{table}[!t]
\caption{DynaDreamer Per-Parameter Estimation Breakdown (Training-Identical).}
\label{tab:param_detail}
\centering
\setlength{\tabcolsep}{3pt}
\begin{tabular}{l ccc}
\toprule
Parameter & Ground Truth & Predicted & Rel. Err. \\
\midrule
Mass (kg)                   & 3034 $\pm$ 2710 & 2955 $\pm$ 1759 & 0.329 \\
$I_z$ (kg$\cdot$m$^2$)      & 3700 $\pm$ 6039 & 3437 $\pm$ 3337 & 0.509 \\
$C_f$ (kN/rad)              & 93.3 $\pm$ 62.2 & 80.0 $\pm$ 38.6 & 0.331 \\
$C_r$ (kN/rad)              & 51.1 $\pm$ 27.5 & 40.2 $\pm$ 16.4 & 0.255 \\
$l_f$ (m)                   & 1.17 $\pm$ 0.37 & 1.12 $\pm$ 0.24 & 0.153 \\
$l_r$ (m)                   & 2.07 $\pm$ 0.56 & 2.25 $\pm$ 0.43 & 0.120 \\
$\delta_{\max}$ ($^\circ$)  & 70.0 $\pm$ 5.5  & 64.4 $\pm$ 0.6  & 0.090 \\
\bottomrule
\end{tabular}
\end{table}

\subsection{Case Study}
 
We present a case study on a zero-shot rollout of the 706~kg Microlino subcompact (below the 1105~kg minimum of the training fleet) and contrast STORM with DynaDreamer on a roundabout task. A supplementary Fusorosa bus rollout probes the opposite, long-wheelbase end of the fleet spectrum in the parameter-identification case.
 
\subsubsection{Qualitative Trajectory and Control Behavior}
 
Fig.~\ref{fig:case_traj} plots signed lateral deviation and heading error. STORM remains biased to one side of the lane for 71\% of the episode with a $-1.45$~m peak excursion, whereas DynaDreamer oscillates symmetrically around the center line and stays within $0.70$~m; heading error follows the same pattern, swinging up to $\pm 45^\circ$ for STORM versus $\pm 16^\circ$ for DynaDreamer. This tighter tracking yields the smoother control signals of Fig.~\ref{fig:case_control}: STORM issues saturating steer reversals and braking spikes, whereas DynaDreamer commits to low-gain commands throughout, consistent with its lowest jerk reported in Fig.~\ref{fig:driving_metrics}. The root cause is representational: lacking an ego-dynamics prior, STORM cannot anticipate how its own steering warps the egocentric observation under the unfamiliar short-wheelbase chassis and resorts to over-correction. DynaDreamer's prior supplies this warp in advance, enabling the symmetric, low-gain tracking above.
 
\begin{figure}[!t]
\centering
\includegraphics[width=0.9\columnwidth]{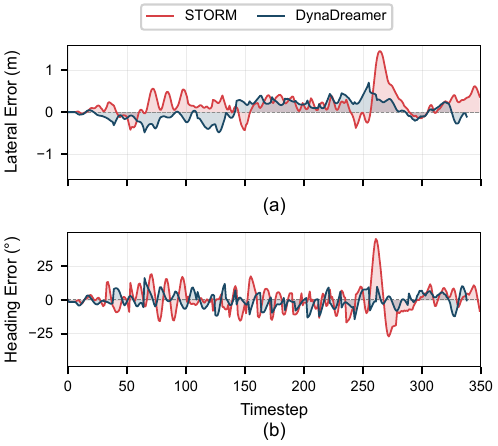}
\caption{Zero-shot Microlino tracking error over time: (a) lateral deviation and (b) heading error for STORM and DynaDreamer.}
\label{fig:case_traj}
\end{figure}
 
\begin{figure}[!t]
\centering
\includegraphics[width=0.9\columnwidth]{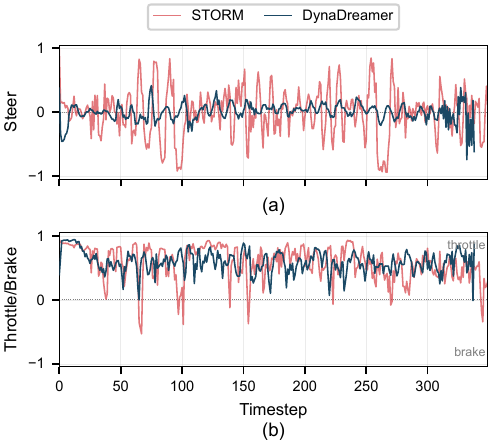}
\caption{Control inputs (STORM vs.\ DynaDreamer) on the zero-shot Microlino episode: (a) steering and (b) throttle and brake.}
\label{fig:case_control}
\end{figure}

\subsubsection{Low-Dimensional State Prediction and Parameter Identification}
 
DynaDreamer's smooth behavior stems from its Neural-ODE module. Fig.~\ref{fig:case_dyn} overlays all six predicted dynamic states against ground truth; predictions remain aligned over the entire episode, including high-frequency lateral and yaw content, underpinning the control quality above.
 
\begin{figure}[!t]
\centering
\includegraphics[width=0.9\columnwidth]{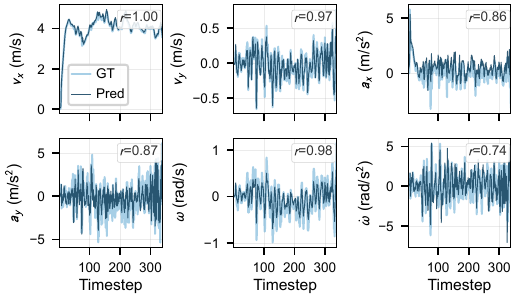}
\caption{DynaDreamer low-dimensional state prediction versus ground truth, with Pearson correlation $r$ annotated per state.}
\label{fig:case_dyn}
\end{figure}
 
Fig.~\ref{fig:case_param} traces zero-shot identification across both chassis extremes for a representative cross-section of the parameter vector: the two geometric terms $(l_f, l_r)$ and the rear cornering stiffness $C_r$. This triplet spans both the geometric and the lateral-force parameter groups and reflects the typical identification behavior. The two geometric terms settle near the ground truth with moderate steady-state bias, and $C_r$ tracks the ground-truth trend with a residual offset. Such residuals are expected because vehicle parameters are tightly coupled in the bicycle model; they do not impair control significantly, as the parameter head is auxiliary and the neural residual absorbs the bias, enabling generalization across the full chassis spectrum (Table~\ref{tab:param_summary}).
 
 
\begin{figure}[!t]
\centering
\includegraphics[width=0.9\columnwidth]{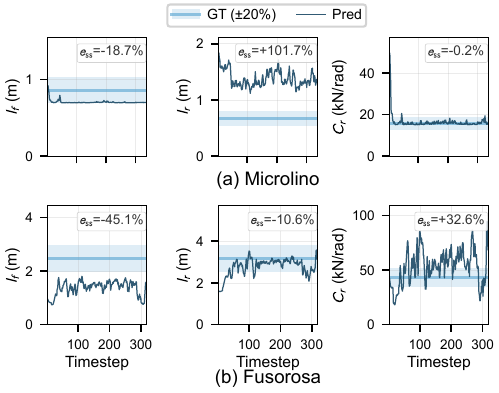}
\caption{Zero-shot identification of $(l_f, l_r, C_r)$ for (a) Microlino and (b) Fusorosa; blue band marks ground truth with $\pm 20\%$ tolerance, navy line is the running estimate, and $e_{\mathrm{ss}}$ is the steady-state relative error over the final $30\%$ of the episode.}
\label{fig:case_param}
\end{figure}
 
\section{Conclusion}
\label{sec:conclusion}
This paper formalizes the structural information bottleneck in BEV-based WM learning and introduces DynaDreamer to resolve it through explicit ego-dynamics conditioning grounded in identifiable chassis physics. By injecting a physics-informed context into the WM's prior and posterior distributions, the proposed method confines the WM to scene dynamics beyond ego-motion and achieves zero-shot cross-embodiment adaptation, with a single trained model serving a dynamically diverse fleet without per-platform retraining. More broadly, any observation-transition component that is exogenous and deterministic given a known context is a candidate for structural conditioning, and the principle may extend to other embodied agents whose own motion dominates the egocentric observation flow. Future work includes extending the ego-dynamics conditioning to multi-agent imagination, where each traffic participant carries its own dynamics context, and sim-to-real deployment in which each fleet platform contributes identification data and receives an adapted policy without retraining.

{\footnotesize
\bibliographystyle{IEEEtranN}
\bibliography{ref}
}


\end{document}